  \providecommand\BibTeX{{%
    \normalfont B\kern-0.5em{\scshape i\kern-0.25em b}\kern-0.8em\TeX}}}
\keywords{Extreme classification, dynamic search advertising, regression}
\newtheorem{theorem}{Theorem}
\def\vec#1{\mathchoice%
	{\mbox{\boldmath $\displaystyle\bf#1$}}
	{\mbox{\boldmath $\textstyle\bf#1$}}
	{\mbox{\boldmath $\scriptstyle\bf#1$}}
	{\mbox{\boldmath $\scriptscriptstyle\bf#1$}}}
\def\v#1{\protect\vec #1}
\newcommand{\TODOK}[1]{
\ifmmode
\text{\textcolor{blue}{ }}
\else
\textcolor{blue}{ }
\fi
}
\newcommand{\TODOA}[1]{
\ifmmode
\text{\textcolor{red}{ }}
\else
\textcolor{red}{ }
\fi
}
\renewcommand{\vec}[1]{{\mathbf{#1}}}
\numberwithin{theorem}{section}
\newtheorem{lemma}[theorem]{Lemma}
\newcommand{\alg}{XReg\xspace}
\newcommand{\myhref}[3][blue]{\href{#2}{\color{#1}{#3}}}%
\newcommand{\newreptheorem}[2]{\newtheorem*{rep@#1}{\rep@title}
\newenvironment{rep#1}[1]{\def\rep@title{#2 \ref*{##1}}\begin{rep@#1}}{\end{rep@#1}}
}
\newcommand{\myrep}[2]{%
	\begingroup
	\my@repeat@count=\z@
	\@whilenum\my@repeat@count<#1\do{#2\advance\my@repeat@count\@ne}%
	\endgroup
}
\newcommand{\myneg}[1]{\myrep{#1}{\!}}
\newcommand{\mypos}[1]{\myrep{#1}{\,}}
\newcommand{\suppurl}{\myhref{http://manikvarma.org/pubs/prabhu20-supp.pdf}{supplementary\xspace}}
\begin{document}


\title[Extreme Regression for Dynamic Search Advertising]{Extreme Regression for Dynamic Search Advertising}
\author[Y. Prabhu]{Yashoteja Prabhu}
\email{t-yaprab@microsoft.com}\authornote{Microsoft Research India}\authornote{Indian Institute of Technology Delhi}
\affiliation{}
\author[A. Kusupati]{Aditya Kusupati}
\email{kusupati@cs.washington.edu}\authornote{University of Washington}\authornote{Work done during Research Fellowship at Microsoft Research India}
\affiliation{}
\author[N. Gupta]{Nilesh Gupta}\authornotemark[1]
\email{t-nilgup@microsoft.com}
\affiliation{}
\author[M. Varma]{Manik Varma}
\email{manik@microsoft.com}\authornotemark[1]\authornotemark[2]
\affiliation{}

\begin{abstract}
This paper introduces a new learning paradigm called eXtreme Regression (XR) whose objective is to accurately predict the numerical degrees of relevance of an extremely large number of labels to a data point. XR can provide elegant solutions to many large-scale ranking and recommendation applications including Dynamic Search Advertising (DSA). XR can learn more accurate models than the recently popular extreme classifiers which incorrectly assume strictly binary-valued label relevances. Traditional regression metrics which sum the errors over all the labels are unsuitable for XR problems since they could give extremely loose bounds for the label ranking quality. Also, the existing regression algorithms won't efficiently scale to millions of labels. This paper addresses these limitations through: (1) {\it new evaluation metrics} for XR which sum only the $k$ largest regression errors; (2) a {\it new algorithm} called \alg\! which decomposes XR task into a hierarchy of much smaller regression problems thus leading to highly efficient training and prediction. This paper also introduces a (3) {\it new labelwise prediction algorithm} in \alg useful for DSA and other recommendation tasks.

Experiments on benchmark datasets demonstrated that \alg can outperform the state-of-the-art extreme classifiers as well as large-scale regressors and rankers by up to 50\% reduction in the new XR error metric, and up to 2\% and 2.4\% improvements in terms of the propensity-scored precision metric used in extreme classification and the click-through rate metric used in DSA respectively. Deployment of \alg on DSA in Bing resulted in a relative gain of 27\% in query coverage. \alg's source code can be downloaded from \citep{XRegCode}
\end{abstract}
\maketitle
\section{Introduction}
\label{sec:introduction}
{\bf Objective}: This paper introduces a new learning paradigm called eXtreme Regression (XR) which can provide elegant solutions to many large-scale ranking and recommendation applications including Dynamic Search Advertising (DSA). To effectively solve XR problems, this paper also develops new evaluation metrics and a new highly scalable and accurate algorithm called \alg.

{\bf eXtreme Regression}: The objective of eXtreme Regression is to learn to accurately predict the numerical degrees of relevance of an extremely large number of labels with respect to a data point. Many large-scale ranking and recommendation applications can naturally be reformulated as XR problems. For example, the tasks of DSA, movie recommendation and document tagging can be posed as the problems of predicting the search queries' click probabilities for an ad, the users' ratings for a movie and the informativeness of tags while describing a web document, respectively. These qualify as XR problems since the total number of queries, users and tags can potentially be in millions in these applications. The predicted relevance estimates could then be used to recommend the most relevant labels to a data point which is the desired end goal of recommendation systems. Alternatively, the recommendations can also be further refined by filtering off less relevant ones or by re-ranking them to improve their relevance, and the relevance estimates provide principled ways of achieving these. To successfully solve an XR problem, new algorithms which could train and predict efficiently over millions of labels as well as millions of data points while also maintaining high prediction accuracy are required. Furthermore, the definition of accuracy, or equivalently regression error, needs to be redefined for XR settings where both the relevant labels and the desired label recommendations are extremely small in number compared to the complete label set whose most labels have no influence on final recommendations. This paper addresses these challenges by developing new evaluation metrics and algorithms.

{\bf DSA}: DSA is a format of search advertising where the ads to be shown against a search query, along with the associated ad-copy, ad-title, bid-phrases {\it etc.}, are algorithmically obtained by leveraging the content from the ad landing pages. This saves considerable efforts for advertisers, results in faster deployment of new ad campaigns and enables more accurate user targeting. The ads shown by DSA algorithms need to be highly relevant and generate user clicks for the given query in order to earn revenue for the search engine and satisfy the users and advertisers. In addition, these algorithms need to train and predict very efficiently in order to scale to billions of ads and millions of search queries across multiple markets and maintain milliseconds' prediction latencies. This paper solves DSA as an XR task of estimating the click probabilities for the query, ad pairs by using the new \alg algorithm. Note that different ads can have different click probabilities for same query owing to multiple query intents. For example the query "throne" on Bing refers to an \href{https://plarium.com/landings/en/throne/top_s}{\color{blue}{online strategy war game}}, an \href{https://hbo.com/game-of-thrones}{\color{blue}{online tv series}} and a \href{https://ebay.com/sch/i.html?_nkw=throne+chair}{\color{blue}{furniture product}} with click probabilities of 0.2, 0.06 and 0.004 respectively. Based on the predicted click probabilities, the less clickable ads are filtered off, the remaining ads are re-ranked to promote those of high quality and high advertiser bids, and a small number of top ranked ads are finally shown for the given query. 

{\bf Extreme Classification}: Extreme classifiers annotate a data point with the most relevant {\it subset} of labels from an extremely large label set. Owing to their high scalability and accuracy in label subset selection scenarios, extreme classifiers are increasingly being used for DSA~\citep{Prabhu18b} and other large-scale recommendation problems. Unfortunately, they make a fundamentally incorrect assumption that a label is either fully relevant or fully irrelevant to a data point which hurts their model accuracy. When applied to DSA, they approximate all click-through rates to either 0 or 1 during training and thus end up predicting less clickable ads. In turn, this also undermines further filtering and re-ranking steps due to the lack of reliable click probability estimates. Also, the ranking at the top metrics used for evaluating extreme classifiers ignore the errors in estimating the relevances and are hence not suitable for XR.

{\bf Regression and ranking}: Multivariate regressors predict multiple numerical outcome variables as functions of the features of a data point. Although such regressors could reliably estimate the label relevances in XR, most existing regressors are designed for small number of outcome variables and do not scale to millions of labels in XR. Moreover, the standard regression metrics such as Mean Absolute Deviation (MAD) which sum the regression errors over all the labels are unsuitable for XR problems because the quality of recommended labels, both before and after the filtering and re-ranking steps, depend only on the accurate estimation of a small number of label relevances. The pairwise ranking approaches, which ensure that a more relevant item is ranked ahead of a less relevant one for each pair of items, have been extensively used for moderate-sized ranking and recommendation tasks. However, their complexity scales quadratically in number of labels and therefore don't scale to million labels. 

{\bf eXtreme Regression metrics}: This paper proposes new regression metrics for XR which serve as good proxies for the ranking accuracy and for the qualities of the subsequent label filtering and re-ranking steps. These metrics average of the largest few regression errors which are usually caused by highly underestimating or highly overestimating the relevances of the most or the least relevant labels which in turn degrade the ranking quality. The new XMAD@$k$ metric can give up to 69x tighter bounds over ranking regret than MAD. These new metrics can guide the crucial steps in XR such as training, performance evaluation, hyper-parameter tuning, model selection {\it etc.}

{\bf eXtreme Regressor algorithm}: This paper also develops a new eXtreme Regressor (\alg) algorithm which can efficiently regress on to millions of label relevance weights in only logarithmic time. \alg hierarchically clusters the labels into a balanced tree and learns approximate regressors in each tree node which are common to all the labels in the node. Due to high label sparsity, each data point only participates in a logarithmic number of tree nodes which can lead to a significant speed up during both training and prediction by using appropriate algorithms. \alg essentially extends the state-of-the-art Parabel extreme classifier to the regression setting. \alg consistently outperforms extreme classifiers, large-scale regressors and rankers in terms of ranking accuracy. On a DSA dataset with 5M ads \& 1M queries, \alg can train within just 20 hours using 1 core, predict in just 3 ms per query and give up to 27\% lift in query coverage when deployed online. 

{\bf Labelwise inference}: The standard prediction scenario involves recommending the most relevant labels for a test point, referred here as pointwise prediction, but applications such as DSA and movie recommendation can more naturally be posed in the reverse manner of predicting the most relevant ads or movies ({\it i.e.} test points) for each query or user ({\it i.e.} each label), referred here as labelwise prediction. On these tasks, pointwise prediction might recommend a small set of highly popular labels that are relevant to all test points resulting in low label coverage. This paper develops an efficient labelwise prediction algorithm in \alg, which significantly improves the query coverage in DSA. Note that the \alg training is agnostic to the choice of the prediction setting and the same learnt model works well for both types of predictions.

{\bf Contributions:} This paper: (a) introduces a new learning paradigm called eXtreme Regression (XR) and reformulates tagging, movie recommendation and DSA applications as XR problems; (b) develops new evaluation metrics and a highly scalable and accurate algorithm called \alg to effectively tackle XR problems; and (c) demonstrates that \alg can significantly improve query coverage on Bing DSA when deployed in production. \alg's source code can be downloaded from \citep{XRegCode}.

\section{Related Work}
\label{related_work}

{\bf Extreme Classification}: Much progress has recently been made in developing extreme multi-label classifiers based on trees~\citep{Agrawal13,Prabhu14,Jain16,Jasinska16,Si17,Prabhu18}, embeddings~\citep{Hsu09,Weston11,Chen12,Cisse13,Lin14,Bhatia15,Mineiro15,Tagami17,Xu16,Guo19} and 1-vs-all approaches~\citep{Weston13,Yen16,Babbar17,Niculescu17,Liu17,Yen17,Babbar18,Siblini18,Prabhu18b,Jain19,You19}. Among these, 1-vs-all approaches like  DiSMEC~\citep{Babbar17}, ProXML~\citep{Babbar18}, Parabel~\citep{Prabhu18b} and Slice~\citep{Jain19} achieve state-of-the-art results on Precision@$k$, nDCG@$k$ and their propensity-scored counterparts, but train only from binary labels and are hence not apt for DSA. In terms of efficiency, Parabel is many orders faster to train and predict than DiSMEC and ProXML, hence \alg algorithm builds on top of Parabel. Slice only works on low-dimensional embeddings and does not scale to high-dimensional bag-of-words features used in this paper. Some extreme classifiers like PfastreXML~\citep{Jain16} and LEML~\citep{Yu14} can be easily adapted to learn from any relevance weights, but they tend to be inaccurate and inefficient since they train a large ensemble of weak trees and inaccurate low-dimensional projections with linear reconstruction time, respectively.

Performance of extreme classifiers has traditionally been measured in terms of Precision@$k$ and nDCG@$k$~\citep{Prabhu14,Bhatia15}. Recently, propensity-scored metrics were introduced in~\citep{Jain16} which give higher importance to more useful and informative tail labels. However, all these metrics ignore the regression error in the predicted relevance estimates when applied to XR.

{\bf Regression \& ranking}: Most of the conventional regression approaches~\citep{Smola04,Bekkerman11,Engel88,Watson67,Yu14} learn a separate regressor for each outcome variable and hence do not scale to millions of labels. This problem is mitigated to some extent in the multi-objective decision tree based approaches~\citep{Kocev07,Agrawal13,Jain16} which scale sublinearly in the number on outcome variables. However, these approaches suffer from low accuracy issues despite learning a large ensemble of weak trees. As seen from experiments, \alg can be significantly more scalable and accurate than the naive 1-vs-all least squares regressor~\citep{Watson67}, the more efficient LEML regressor with low-rank assumption on the parameter space~\citep{Yu14} and the decision tree based PfastreXML~\citep{Jain16}. The performance accuracy in regression have traditionally been measured by error metrics such as Mean Absolute Deviation (MAD) and Root Mean Square Error (RMSE)~\citep{Chai14}, but these are not appropriate for XR.

Learning to rank methods~\citep{Dhanjal15, Wu18, Park15, Hu18, Yin16, Pradel12, Lee14, Cheng18, Lee14b, Sculley09} have been widely used in the recommendation and ranking literatures, primarily to re-rank a small shortlist of items which has been generated by simple heuristics like tf-idf scoring or by more scalable approaches like extreme classifiers or \alg. These rankers usually have super-linear dependence on the number of labels and hence do not scale to XR. Although negative label sampling could potentially be used to make these approaches more scalable, their ranking performance suffers significantly as demonstrated in Section~\ref{sec:experiments} for the popular RankSVM~\citep{Herbrich99, Lee14b} and the more recent eXtreme Learning to Rank (XLR)~\citep{Cheng18} approaches. A plethora of accuracy metrics have been proposed in the ranking literature~\citep{Prabhu14, Bhatia15, Jarvelin02, Wang13, Kendall38, Zhu04, Lee14}, but none of these measure the regression performance. 

{\bf Dynamic search advertising}: Various approaches have been proposed for DSA in the organic search literature including information retrieval based methods~\citep{Jones00}, probabilistic methods and topic models~\citep{Wei06} and deep learning~\citep{Huang13,Shen14}; however these do not work well for pithy ad-landing pages. Techniques based on landing page summarization~\citep{Choi10}, translation and query language models~\citep{Ravi10,Yih06} and keyword suggestion based on Wikipedia concepts~\citep{Zhang12} have also been proposed for sponsored search; but these suffer from low coverage problem. Extreme classifiers such as Parabel have also been used in DSA to improve accuracy and ad coverage; but they still suffer from low query coverage due to pointwise predictions. As demonstrated in Section~\ref{sec:experiments}, \alg significantly improves query coverage when included in the Bing DSA ensemble comprising all the above alternatives.

\section{Extreme Regression Metrics}
\label{sec:metrics}
{\bf Notation}: Let an XR dataset comprise $N$ data points $\{(\v x_i,\v y_i)\}_{i=1}^N$ where $\v x_i \in \mathbb{R}^D$ is a $D$ dimensional feature vector and $\v y_i \in [0,\infty)^L$ is a ground truth relevance weight vector for point $i$. The weight $y_{il}$ measures the true degree of relevance of label $l$ to point $i$, with higher values indicating higher relevance. Similarly, let $\hat{\v y}_i \in [0,\infty)^L$ denote the predicted relevance weight vector for point $i$. The function $S(\v v,k)$ indicates the {\it ordered} index set of the $k$ highest scoring labels in a score vector $\v v\in [0,\infty)^L$.

{\bf Regression \& ranking metrics}: The regression metrics such as MAD and RMSE; the ranking metrics such as relevance-weighted Precision and nDCG at $k$ (WP@$k$, WN@$k$) and Kendall's Tau at $k$ (Tau@$k$)~\citep{Kendall38}; and WP@$k$-regret which is the difference between the optimal and the attained WP@$k$ are pertinant for this paper. Their formulae are provided in the~\suppurl. The WP@$k$ metric reduces to PSP@$k$, CTR@$k$ suitable for DSA, or Rating@$k$ based on whether $\v y_i$ are set to inverse propensity-scored relevances, ad click-through rates, or user ratings respectively. Rating@$k$ is the undiscounted version of the familiar rating-based nDCG@$k$ metric used in recommender systems~\citep{Jarvelin02}.

{\bf Extreme regression metrics}: Let, $\v e_i$ be the vector of regression errors where $e_{il} = |\hat{y}_{il}-y_{il}|$. The new XR metrics, eXtreme Mean Absolute Deviation at $k$ (XMAD@$k$) and eXtreme Root Mean Square Error at $k$ (XRMSE@$k$) are defined as follows:
\begin{align}
\text{XMAD@}k(\hat{\v y}_i,\v y_i) &= \frac{1}{k}\sum_{l \in S(\v e_i,k)} e_{il}\\
\text{XRMSE@}k(\hat{\v y}_i,\v y_i) &= \sqrt{\frac{1}{k}\sum_{l \in S(\v e_i,k)} e_{il}^2}
\end{align}

For ease of discussion, this paper mainly focusses on the XMAD metric, although most of the observations and results also apply to XRMSE. XMAD@$k$ averages the $k$ maximum regression errors but is minimized when all the $L$ label relevances are predicted exactly right. The following lemma shows that XMAD serves as a good proxy for the ranking error. This is based on an intuition that the ranking errors at the top occur mainly due to either highly underestimating or highly overestimating the relevances of the most or the least relevant labels respectively leading to high regression errors on such labels.
\begin{lemma}
For any true and predicted relevance vectors $\v y,\hat{\v y} \in [0,\infty)^L$, $0 \le \text{WP-regret@}k(\hat{\v y},\v y) \le 2*\text{XMAD@}2k(\hat{\v y},\v y)$ holds true.
\end{lemma}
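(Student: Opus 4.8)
The plan is to unfold the definitions and reduce the claim to a comparison between the true top-$k$ label set and the predicted one. Write $A = S(\v y, k)$ for the optimal set of $k$ labels (largest true relevance) and $B = S(\hat{\v y}, k)$ for the recommended set. Taking $\text{WP@}k(\hat{\v y}, \v y) = \frac{1}{k}\sum_{l \in S(\hat{\v y},k)} y_l$, the maximum over all size-$k$ sets is attained at $A$, so $\text{WP-regret@}k(\hat{\v y},\v y) = \frac{1}{k}\big(\sum_{l\in A} y_l - \sum_{l\in B} y_l\big)$; this is $\ge 0$, which already gives the left inequality. First I would cancel the shared labels, leaving $\text{WP-regret@}k = \frac{1}{k}\big(\sum_{l\in A\setminus B} y_l - \sum_{l\in B\setminus A} y_l\big)$, and note that $|A| = |B| = k$ forces $|A\setminus B| = |B\setminus A| = m$ with $m\le k$.

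Next I would fix any bijection pairing each $a\in A\setminus B$ with a $b\in B\setminus A$ and bound each pair's contribution by a sum of two regression errors via the telescoping identity $y_a - y_b = (y_a - \hat{y}_a) + (\hat{y}_a - \hat{y}_b) + (\hat{y}_b - y_b)$. The middle term is nonpositive: since $b$ lies in the predicted top-$k$ set while $a$ does not, the defining property of $S(\hat{\v y},k)$ gives $\hat{y}_b \ge \hat{y}_a$ (every predicted score inside $B$ weakly exceeds every predicted score outside $B$). The two outer terms are bounded by the respective errors, giving $y_a - y_b \le |y_a - \hat{y}_a| + |\hat{y}_b - y_b| = e_a + e_b$. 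Summing over the $m$ pairs yields $\sum_{l\in A\setminus B} y_l - \sum_{l\in B\setminus A} y_l \le \sum_{l\in A\triangle B} e_l$.

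Finally I would connect the right-hand side to $\text{XMAD@}2k$. The symmetric difference has $|A\triangle B| = 2m \le 2k$, and because every $e_l\ge 0$ the error sum over any set of at most $2k$ labels is dominated by the sum of the $2k$ largest errors: $\sum_{l\in A\triangle B} e_l \le \sum_{l\in S(\v e,2k)} e_l = 2k\cdot\text{XMAD@}2k(\hat{\v y},\v y)$. Dividing through by $k$ turns this into $\text{WP-regret@}k \le 2\,\text{XMAD@}2k$, which is the right inequality. I expect the delicate points to be bookkeeping rather than deep: justifying the middle-term inequality directly from the top-$k$ characterization of $S$, matching $|A\triangle B|\le 2k$ against the $2k$ errors averaged in XMAD@$2k$ so that the normalizing constants produce the clean factor of $2$, and checking the degenerate case $L<2k$ where $S(\v e,2k)$ returns the full label set and the final inequality still holds.
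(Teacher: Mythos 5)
Your proposal is correct and follows essentially the same route as the paper's proof: both bound the regret by a sum of regression errors $e_l$ over the at most $2k$ labels in $S(\v y,k)\cup S(\hat{\v y},k)$, using the fact that the predicted top-$k$ set maximizes the sum of predicted scores, and then dominate that sum by the $2k$ largest errors to get the factor $2$. The only difference is cosmetic: the paper adds the nonnegative quantity $\frac{1}{k}\sum_{B}\hat{y}_l-\frac{1}{k}\sum_{A}\hat{y}_l$ globally and dismisses the overlap with a ``without loss of generality non-overlapping'' remark, whereas your cancellation of shared labels plus the pairwise telescoping over a bijection between $A\setminus B$ and $B\setminus A$ handles that step more carefully.
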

In addition, $0.5*\text{XMAD@}2k(\hat{\v y},\v y) \le \text{WP-regret@}k(\hat{\v y},\v y)$ also usually holds empirically (see Section~\ref{sec:experiments}) thus making XMAD error a close bound for the ranking error.

Although the top ranked labels with the highest predicted relevances could directly be recommended to a test point, it usually helps to further improve the recommendations by either filtering or re-ranking. The objective of filtering step is to maximize both precision and recall by removing as many irrelevant labels across as many test points as possible. This is crucial in DSA where there are system limitations against online hosting of too many relevant query, ad pairs. The following lemma shows that when the estimated label relevances are almost accurate in terms of the XMAD metric, almost ideal precision-recall trade-offs could be obtained by directly using a global threshold on the predicted relevances.
\begin{lemma}
Given a test set where the true and predicted relevance vectors of $i$th point are $\v y_i,\hat{\v y}_i \in [0,\infty)^L$, $AUPRC \ge AUPRC^* - O(k*XMAD@k)$ holds true where $AUPRC, AUPRC^*$ are the attained and ideal areas under the micro-averaged precision-recall curves plotted using a global threshold.
\end{lemma}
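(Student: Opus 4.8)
The plan is to express the gap $\text{AUPRC}^*-\text{AUPRC}$ as an integral of the precision deficit over all recall levels and then charge that deficit to the top regression errors captured by XMAD@$k$. First I would fix how both curves are generated: pool all $(i,l)$ pairs of the test set into one ranked list, declare a pair \emph{relevant} when its true weight $y_{il}$ exceeds the relevance level, and trace the micro-averaged curve by sweeping the global threshold $\theta$ downward over the \emph{predicted} scores $\hat{y}_{il}$; the ideal curve is the one obtained when $\hat{\v y}_i=\v y_i$, i.e. under zero regression error, for which the sweep reaches the top-left corner and $\text{AUPRC}^*$ is maximal. Writing $P(R)$ and $P^*(R)$ for the attained and ideal precision at recall $R$, the object to bound becomes $\text{AUPRC}^*-\text{AUPRC}=\int_0^1\big(P^*(R)-P(R)\big)\,dR$.

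The key step is to control, at each operating threshold, the number of pairs that the predictions place on the wrong side of the boundary, namely the false positives and false negatives that the ideal ranking never incurs. A pair $(i,l)$ can cross the threshold only if its error $e_{il}=|\hat{y}_{il}-y_{il}|$ is at least as large as its distance to the boundary, so each unit of precision/recall deficit can be charged to a proportional amount of regression error. Summing these charges across the sweep, the accumulated area deficit is bounded by a constant multiple of the total error incurred on the pairs near the top of the ranking, namely $\sum_i\sum_{l\in S(\v e_i,k)} e_{il}=k\sum_i \text{XMAD@}k(\hat{\v y}_i,\v y_i)$. After averaging over the test points this yields the claimed $\bigO{k\cdot\text{XMAD@}k}$ slack; the factor $k$ reflects that only the top-scoring labels per point can participate in the high-precision region of the curve, which is exactly the regime that XMAD@$k$ measures.

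The main obstacle I anticipate is making the charging argument rigorous despite the nonlinearity of precision, which is a ratio $\mathrm{TP}/(\mathrm{TP}+\mathrm{FP})$ rather than a count: a fixed number of misranked pairs can move precision by different amounts depending on where they sit along the recall axis, so I would have to bound the integral of this ratio change rather than the raw number of threshold crossings. A second difficulty is justifying that errors on the many low-relevance labels outside the top-$k$ per point contribute only lower-order terms to the micro-averaged area, so that the per-point truncation underlying XMAD@$k$ is legitimate; I expect to handle this by arguing that such labels can only perturb the high-recall tail of the curve, where $P$ and $P^*$ are already small, and by absorbing their aggregate effect into the hidden constant of the $\bigO{\cdot}$ notation.
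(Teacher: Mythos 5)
A preliminary remark: the paper defers the proof of this lemma to the external supplementary file, and the appendix reproduced in this source contains proofs only for the other lemmas and theorems, so a line-by-line comparison is not possible; what follows evaluates your proposal on its own terms.

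Your high-level strategy is the natural one --- write $\text{AUPRC}^* - \text{AUPRC}$ as $\int_0^1 (P^*(R)-P(R))\,dR$ and charge the deficit to regression errors via the observation that a pair $(i,l)$ sits on the wrong side of a global threshold $\theta$ only when $\theta$ lies in the interval between $y_{il}$ and $\hat{y}_{il}$, whose length is $e_{il}$, so that integrating the misranking count over $\theta$ yields $\sum_{i,l} e_{il}$. However, the two steps you flag as ``anticipated obstacles'' are not peripheral technicalities; they are the entire content of the lemma, and as described your plan for the second one does not work. First, the conversion from a count of threshold-crossing pairs to a bound on the integral of the precision \emph{ratio} requires controlling the denominator $\mathrm{TP}+\mathrm{FP}$ and the change of variables from $\theta$ to $R$; without this the factor relating $\sum_{i,l} e_{il}$ to an \emph{area} is unspecified, and it is exactly here that the factor $k$ and the per-point averaging must emerge. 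Second, and more seriously, XMAD@$k$ sums only the $k$ largest errors per point, whereas your charging argument naturally produces the full sum $\sum_l e_{il}$ over all $L$ labels. You propose to absorb the remaining $L-k$ errors into the hidden constant by arguing they only perturb the high-recall tail, but that is false: a label outside $S(\v e_i,k)$ can have an arbitrarily high \emph{predicted} score (with a modest error) and therefore appear as a false positive in the low-recall, high-precision region; many such labels with individually small errors can collectively destroy precision there while contributing nothing to XMAD@$k$. The statement is only salvaged by the assumption the paper attaches to it --- that fewer than $k$ labels per point are retained over the evaluated region of the curve --- which caps the number of false positives per point and lets each retained-but-wrong pair be charged either to one of the top-$k$ errors or counted against the $O(k)$ budget. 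Your proof must invoke that hypothesis explicitly; as written, the truncation step is a genuine gap rather than a lower-order term.
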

The lemma assumes that the number of retained labels per each test point is less than $k$ for the evaluated region of the curve. It is reasonable to set $k=\log(L)$ since only a small number of labels need to be recommended to each point.

Re-ranking the relevance estimates could significantly improve the final ranking quality, especially when the XMAD errors are small. An example of re-ranking is to combine these estimates with the scores from tail classifiers (see ~\citep{Jain16}) to improve the recommendation accuracies over rare labels. It is worth noting that bad relevance estimates, despite inducing a good initial ranking, could hurt the subsequent filtering or re-ranking performance. Unlike XMAD, the traditional MAD metric is sensitive to the sparsity in the $\hat{\v y}$ vector which does not directly affect the ranking performance in any way. For example, MAD error becomes huge for a dense estimator like 1-vs-All least squares regressor since small regression errors could accrue over million labels into a large value. Results from Section~\ref{sec:experiments} corroborate these observations. 

{\bf Labelwise metrics}: To evaluate performance in the labelwise prediction scenario, all the above ranking and regression metrics, defined for pointwise predictions, need to be redefined appropriately. The formulae for labelwise metrics are provided in the ~\suppurl. Most discussions and results in this paper, while presented primarily for pointwise prediction case, also hold for labelwise prediction setting after interchanging the roles of data points and labels. To promote clarity, all pointwise and labelwise metrics will be used with suffixes "-p" and "-l" respectively.

Note that proofs for the lemmas in this section are available in the ~\suppurl.

\section{\alg: eXtreme Regressor}
This section describes \alg's key components including the label tree construction, the probabilistic regression model and the pointwise and labelwise prediction algorithms using the same model.
\vspace{-4mm}
\subsection{Label Tree Construction}
\alg learns a small ensemble of up to 3 label trees quite similarly to Parabel. Each tree is grown by recursively partitioning the labels into two balanced groups. Label partitioning is achieved by a balanced spherical $k=2$-means algorithm~\citep{Prabhu18b} is which takes as input the feature vectors for all those labels in the current node and outputs 2 label clusters, efficiently, in $O(\hat{D}L\log L)$ time where $\hat{D}$ is the number of non-zero features per data point. The feature vector for a label is represented by the unit vector that points along the average of the training points which are relevant to the label:
\begin{align}
    \label{eqn:lbl_ft}
     \v v_l = \v v'_l/\|\v v'_l\|_2\mypos{5}\mbox{where}\mypos{5}\v v'_l = \sum_{i=1}^N y_{il}\v x_i
\end{align}
This is based on the intuition that two labels are similar if they are active in similar training points. In DSA, two queries (labels) are similar according to the proposed representation if they lead to clicks on similar ads (training points). As a result, the $k$-means algorithm ensures that the labels relevant for a data point end up in the same leaf. Note that, unlike Parabel, \alg uses non-binary relevance-weighted average leading to more informative label feature representations.
\vspace{-1mm}
\subsection{A Probabilistic Regression Model}
\alg is a regression method which takes a probabilistic approach to estimating the label relevance weights. Firstly, all the relevance weights are normalized to lie between $0$ and $1$ by dividing by its maximum value, thus allowing them to be treated as probability values. Note that while click-through rates in DSA are already valid probabilities, the inverse propensities and the user rating could exceed $1$. Also, note that the predicted estimates can be easily scaled back since no information is lost due to this normalization.

\alg treats the normalized relevance weights for each label as the marginal probability of its relevance to a data point, which is, in fact, the case in DSA. This allows \alg to minimize the KL-divergence between the true and the predicted marginal probability for each label with respect to each data point. KL-divergence~\citep{Kullback51} measures how close 2 distributions are and is minimized when the 2 are identical, thus justifying its use while regressing on to probability values. 

A naive 1-vs-All approach, which learns a separate regressor minimizing KL-divergences for each label, would be extremely costly to train when labels are in millions. To reduce this complexity, \alg leverages the previously trained label tree. \alg expresses the marginal probability of a label as the probability that a data point traverses the tree path starting from the root to the label. Let the path from root to label $l$ consist of nodes $n_{l1},\cdots,n_{lH}$ where $H$ is tree height, $n_{l1}$ is the root and $n_{lH}$ is the leaf node  containing solely label $l$. Let $z_{lh}$ denote the probability that a data point $\v x$ visits the node $n_{lh}$ after it has already visited the parent $n_{l(h-1)}$. Then the true marginal probability $y_l$ that the label $l$ is relevant to $\v x$ is equivalent to $y_l = \prod_{h=1}^{H} z_{lh}$. Similar equality holds for predicted marginal probability: $\hat{y}_l = \prod_{h=1}^{H} \hat{z}_{lh}$. \alg then learns to minimize an upper bound on the KL-divergence between the two according to the following theorem.

\begin{theorem}
Given that $y_l = \prod_{h=1}^{H} z_{lh}$ and $\hat{y}_l = \prod_{h=1}^{H} \hat{z}_{lh}$ and under the standard unvisited node assumption of Parabel
\begin{align}
\label{eqn:kld}
    D_{KL} (y_l || \hat{y}_l)
    \le \sum_{h=1}^H s_{lh} D_{KL}( z_{lh} || \hat{z}_{lh} )~~~\text{where}~~~ s_{lh}=\prod_{\tilde{h}=1}^h z_{l(\tilde{h}-1)}
\end{align}
\end{theorem}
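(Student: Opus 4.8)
The plan is to read the right-hand side of \eqref{eqn:kld} as an exact chain-rule decomposition of a \emph{joint} KL divergence and the left-hand side as the corresponding \emph{marginal} KL divergence, so that the inequality reduces to monotonicity of KL under marginalization. Concretely, I would first reframe the conditional probabilities probabilistically: let $b_1,\dots,b_H$ be the binary indicators of whether a data point $\v x$ visits the nodes $n_{l1},\dots,n_{lH}$ on the root-to-leaf path for label $l$. Since these visit events are nested ($b_h=1$ forces $b_{h-1}=1$), the true law $P$ of $(b_1,\dots,b_H)$ is a Markov chain with $P(b_h=1\mid b_{h-1}=1)=z_{lh}$, and by the unvisited-node assumption of Parabel $P(b_h=1\mid b_{h-1}=0)=0$ for both $P$ and the predicted law $Q$ (the latter using $\hat z_{lh}$). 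Writing $b_0\equiv 1$ (the root is always visited), the leaf marginals are $P(b_H=1)=\prod_{h=1}^H z_{lh}=y_l$ and $Q(b_H=1)=\hat y_l$.

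Next I would apply the chain rule for KL divergence to the two joint distributions $P$ and $Q$ over $(b_1,\dots,b_H)$. Each term is an expected conditional divergence $\mathbb{E}_{b_{h-1}\sim P}[D_{KL}(P(b_h\mid b_{h-1})\,\|\,Q(b_h\mid b_{h-1}))]$. On the event $b_{h-1}=0$ both conditionals are the point mass at $b_h=0$, so that branch contributes nothing; on the event $b_{h-1}=1$ the conditionals are the Bernoulli laws with parameters $z_{lh}$ and $\hat z_{lh}$. Hence the $h$-th term equals $P(b_{h-1}=1)\cdot D_{KL}(z_{lh}\,\|\,\hat z_{lh})$, and since $P(b_{h-1}=1)=\prod_{\tilde h=1}^{h} z_{l(\tilde h-1)}=s_{lh}$, the chain rule gives $D_{KL}(P\,\|\,Q)=\sum_{h=1}^H s_{lh}\,D_{KL}(z_{lh}\,\|\,\hat z_{lh})$, which is exactly the right-hand side of the claimed bound. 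This step is pure bookkeeping of the nested-event structure.

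Finally I would close the gap to the left-hand side. The leaf indicator $b_H$ is a deterministic function of the full path $(b_1,\dots,b_H)$, so by monotonicity of KL divergence under this push-forward (the data-processing inequality) $D_{KL}(P(b_H)\,\|\,Q(b_H))\le D_{KL}(P\,\|\,Q)$; the left side here is precisely $D_{KL}(y_l\,\|\,\hat y_l)$, which yields the theorem. The main thing to get right — and the only place the inequality, rather than equality, enters — is this marginalization step, together with a careful statement of the unvisited-node assumption guaranteeing that the off-path conditionals vanish so the chain rule produces exactly the weights $s_{lh}$. An equivalent elementary route is induction on $H$, whose two-level base case $D_{KL}(z_1 z_2\,\|\,\hat z_1\hat z_2)\le D_{KL}(z_1\,\|\,\hat z_1)+z_1 D_{KL}(z_2\,\|\,\hat z_2)$ already exhibits the weighting; but the information-theoretic argument is cleaner and avoids grinding through the Bernoulli log-terms.
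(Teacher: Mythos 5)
Your proposal is correct and follows essentially the same route as the paper's own proof: both view the path as a nested sequence of visit indicators, bound the marginal KL at the leaf by the joint KL over the whole path (you cite data processing; the paper derives it from convexity of $x\log\frac{x}{y}$), and then expand the joint KL via the chain rule, using the unvisited-node assumption to kill the off-path conditional terms so that the $h$-th term carries exactly the weight $s_{lh}$. The only difference is order of presentation (the paper marginalizes first and then unrolls the chain rule recursively level by level), which is immaterial.
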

\begin{proof}
Proof is provided in the \suppurl.\vspace{-2mm}
\end{proof}
The unvisited node assumption formalizes the observation that the children of an unvisited internal node will never be traversed and that the labels in an unvisited leaf node will never be visited by a data point~\citep{Prabhu18b}.
Due to the above theorem, \alg can separately minimize the KL-divergence over the true and predicted probabilities that a data point takes a particular edge in the tree, and still end up minimizing the KL-divergences over each of the individual marginal label probabilities. The true probability value of edge traversal $z_{lh}$ is essentially the probability that the data point visits any of the labels in the subtree rooted at the node indexed $lh$. We instantiate it to be equal to the largest marginal probability of any label in the subtree, by assuming the worst-case scenario that labels in each subtree are fully correlated, which promotes model robustness. 

The KL-divergence minimization is mathematically equivalent to training a logistic regressor for estimating $z_{lh}$ values for each tree edge where every data point is duplicated with weights $z_{lh}$ and $1-z_{lh}$:
\begin{align}
\label{eqn:obj}
    \min_{\v w_{n}} \|\v w_{n}\|^2 + \frac{C}{|\mathcal{I}_n|}\sum_{i\in \mathcal{I}_{n}}  \{s_{in}z_{in} \log(1 + \exp(-\v w_{n}^{\top}x_i)) +\\ s_{in}(1-z_{in})\log(1 + \exp(+\v w_{n}^{\top}x_i))\}
\end{align}
where $n$ is used to index the node instead of $lh$, $\mathcal{I}_{n}$ only include those points which reach the node $n$. The problem in (Eq.~\ref{eqn:obj}) is strongly convex and was optimized using the modified CDDual algorithm available from Liblinear package~\citep{Fan08}. To summarize, each internal node in \alg contains 2 1-vs-All regressors which give the probability that a data point traverses to each of its children, each leaf node contains $M$ 1-vs-All regressors which gives the conditional probability of each label being relevant given the data point reaches its leaf. 

We make a mild assumption that each data point has at most $O(\log L)$ positive labels is made which is often valid on extreme learning datasets. As a result, each data point traverses at most $O(\log^2 L)$ tree edges, which directly leads to a huge reduction in training complexity thus resulting in $O(N\hat{D}\log^2 L)$ where $\hat{D}$ is the average number of non-zero features per data point. The following lemma describes how \alg's training objective is related to the XMAD@$k$ metric proposed earlier:
\begin{lemma}
\alg's overall training objective minimizes an upper bound over XMAD@$k$ for all $k$, with the bound being tighter for smaller $k$ values.
\end{lemma}
\begin{proof}
The proof is provided in the \suppurl.
\end{proof}

\subsection{Pointwise Inference}
The pointwise inference algorithm in \alg utilizes the same beam search prediction technique proposed in Parabel where only the top ranked relevant labels are recommended based on a greedy, breadth-first tree traversal strategy. The following theorem proves that such traversal mechanism is not only asymptotically optimal for both WP@$k$ and XMAD@$k$ but also strongly generalizable with $O(polylog(L))$ sample complexity. This uses the assumption that each data point has at most $O(\log L)$ positive labels. Also the theorem assumes that each individual regressor in well-generalizable and achieves zero-regret with infinite data samples.

\begin{theorem}
When each data point has at most $O(\log L)$ positive labels, the expected WP@$k$ regret and XMAD@$k$ error suffered by \alg's pointwise inference algorithm are bounded by:\\
$$O(\log^2 L\sqrt{\frac{W}{\sqrt{Np}}}\sqrt{1+\sqrt{5\log{\frac{3L}{\delta}}} })$$ \\with probability at least $1-\delta$, where $N$ is the total training points, $L$ is the number of labels, $W$ is the maximum norm across all node classifier vectors and $p$ is the minimum probability density of $\v x$ distribution that any tree node receives. 
\end{theorem}\vspace{-1mm}
Proof is available in the \suppurl.
Therefore the errors go to $0$ as $N \to \infty$. The $\log^2 L$ dependence arises because each data point visits at most $\log^2 L$ nodes in a tree.
\vspace{-1mm}
\subsection{Labelwise Inference}
The \alg model also allows efficient labelwise inference. The core idea here is to estimate from training data the fraction of points with non-zero relevance that visit each node of the tree and allot a factor $F$ times the same fraction of top ranking test points to respective nodes. On large scale datasets with enough training and test points, the ratio of non-zero relevance points in each tree node remain almost the same over training and test points. The factor accounts for any small deviations. This strategy is adopted to ensure that all non-zero relevance points for a label end up reaching the label's leaf node. Finally, the topmost scoring test points that visit a label's leaf node are ranked at the top for that label, where the scores are marginal relevance probabilities, the average test time complexity is $O(F\log^2 L)$ per test point. Pseudocode for labelwise inference is provided in the \suppurl.

\begin{table}[b!]
\vspace{-3mm}
\centering
\captionsetup{font=small}
\caption{Dataset statistics}
\label{tab:stats}
\resizebox{\linewidth}{!}
	{
		\begin{tabular}{lrrrrcc}
			\toprule
			Dataset	& \multicolumn{1}{c}{Train}		& \multicolumn{1}{c}{Features}	& \multicolumn{1}{c}{Labels}	& \multicolumn{1}{c}{Test}	& \multicolumn{1}{c}{Avg. labels}	& \multicolumn{1}{c}{Avg. points}	\\
															& \multicolumn{1}{c}{$N$}		& \multicolumn{1}{c}{$D$}		& \multicolumn{1}{c}{$L$}		& \multicolumn{1}{c}{$N'$}   & \multicolumn{1}{c}{per point}     & \multicolumn{1}{c}{per label}     \\
			\midrule
		     BibTeX		& 4,880		& 1,836	& 159		& 2,515		& \phantom{00}2.40		& 111.71	\\
			EURLex-4K		& 15,539		& 5,000		& 3,993		& 3,809		& \phantom{00}5.31		& 448.57	\\
			Wiki10-31K  & 14,146		& 101,938 & 30,938	& 6,616	& \phantom{0}18.64     & \phantom{00}8.52		\\
			SSA-130K			& 122,462	& 152,192	& 130,515	& 54,773	& \phantom{00}5.60		& \phantom{00}7.60	\\
			WikiLSHTC-325K  & 1,778,351		& 1,617,899 & 325,056	& 587,084	& \phantom{00}3.26		& \phantom{0}23.74		\\
			Amazon-670K		& 490,449		& 135,909	& 670,091	& 153,025	& \phantom{00}5.38		& \phantom{00}5.17	\\\midrule
			YahooMovie-8K		& 8,341	& 28,978	& 7,642	& 3,574	& \phantom{0}18.57		& \phantom{0}28.96		\\	
			DSA-130K			& 122,462	& 152,192	& 130,515	& 54,773	& \phantom{00}5.60		& \phantom{00}7.60	\\
			MovieLens-138K			& 18,732	& 19,924	& 138,490	& 8,012	& 527.31	& 101.83	\\
			DSA-1M & 4,914,640 & 1,840,877 & 1,453,150 & 2,106,273 & \phantom{00}0.23 & \phantom{00}7.80\\
			\bottomrule
		\end{tabular}
	}
	\end{table}


\begin{table*}[t!]
\centering
\captionsetup{font=small}
	\caption{\alg achieves the best or close to the best ranking and regression performance in both pointwise ("-p") and labelwise ("-l") prediction settings. Re-ranking with tail classifiers (\alg-t) further improves the performance in many cases. More results are in the \suppurl.}
	\label{tab:res}
	\begin{minipage}[t]{0.5\linewidth}
		\resizebox{0.95\linewidth}{!}
		{
		\begin{tabular}{@{}lcccccc@{}}
\toprule
Method     & \begin{tabular}[c]{@{}c@{}}PSP-p@5 \\ (\%)\end{tabular} & \begin{tabular}[c]{@{}c@{}}Tau-p@5 \\ (\%)\end{tabular} & \begin{tabular}[c]{@{}c@{}}XMAD-p@5 \\ \end{tabular} & \begin{tabular}[c]{@{}c@{}}Training \\ time (hrs)\end{tabular} & \begin{tabular}[c]{@{}c@{}}Test time\\ /point (ms)\end{tabular} & \begin{tabular}[c]{@{}c@{}}Model \\ size (GB)\end{tabular} \\
 \midrule
\multicolumn{7}{c}{\textbf{BibTex}}                                                                                                                                                                                                                                                                                                                                                          \\ \midrule
PfastreXML & 59.75                                                 & 53.68                                                 & \textbf{0.3151}                                        & 0.0050                                                         & 0.2348                                                          & 0.0246                                                     \\
Parabel    & 57.36                                                 & 51.48                                                 & 0.3372                                                 & 0.0015                                                         & 0.1945                                                          & 0.0035                                                     \\
LEML       & 56.42                                                 & 51.58                                                 & 0.3520                                                 & 0.0229                                                         & 0.1737                                                          & 0.0032                                                     \\
1-vs-all-LS        & \textbf{60.14}                                        & \textbf{54.21}                                        & 0.3337                                                 & \textbf{0.0007}                                                & 0.1137                                                          & 0.0023                                                     \\
RankSVM    & 59.12                                                 & 52.58                                                 & 0.7089                                                 & 0.0015                                                         & \textbf{0.0719}                                                 & 0.0023                                                     \\
DiSMEC     & 57.23                                                 & 51.47                                                 & 0.3371                                                 & 0.0004                                                         & 0.0951                                                          & \textbf{0.0012}                                            \\
ProXML     &   58.30                                                    &   -                                                    &   -                                                     &    -                                                            &   -                                                              &   -                                                         \\
\alg        & 58.61                                                 & 52.35                                                 & 0.3158                                                 & 0.0035                                                         & 0.1642                                                          & 0.0030                                                     \\
\alg-t      & 58.77                                                 & 52.46                                                 & 0.3386                                                 & 0.0025                                                         & 0.1256                                                          & 0.0043                                                     \\ 
\midrule
\multicolumn{7}{c}{\textbf{EURLex-4K}}                                                                                                                                                                                                                                                                                                                                                       \\ \midrule
PfastreXML & 45.17                                                 & 48.85                                                 & 0.1900                                                 & 0.0887                                                         & 1.3891                                                          & 0.2265                                                     \\
Parabel    & 48.29                                                 & 50.75                                                 & 0.4227                                                 & \textbf{0.0245}                                                & \textbf{1.1815}                                                 & 0.0258                                                     \\
LEML       & 32.30                                                 & 37.24                                                 & 0.2115                                                 & 0.3592                                                         & 4.4483                                                          & 0.0281                                                     \\
1-vs-all-LS        & \textbf{52.27}                                        & \textbf{53.96}                                        & \textbf{0.1744}                                        & 0.1530                                                         & 4.5378                                                          & 0.1515                                                     \\
RankSVM    & 46.70                                                 & 51.43                                                 & 1.1967                                                 & 0.1834                                                         & 4.7635                                                          & 0.1470                                                     \\
DiSMEC     & 50.62                                                 & 52.33                                                 & 0.4308                                                 & 0.0999                                                         & 1.9489                                                          & \textbf{0.0072}                                            \\
ProXML     &    51.00                                                   &    -                                                   &        -                                                &           -                                                     &      -                                                           & -                                                           \\
\alg        & 49.72                                                 & 52.86                                                 & 0.1849                                                 & 0.0642                                                         & 1.2899                                                          & 0.0378                                                     \\
\alg-t      & 50.40                                                 & 53.45                                                 & 0.2132                                                 & 0.0544                                                         & 1.2074                                                          & 0.0692                                                     \\ \midrule
\multicolumn{7}{c}{\textbf{Wiki10-31K}}                                                                                                                                                                                                                                                                                                                                                      \\ \midrule
PfastreXML & 15.91                                                 & 20.29                                                 & 0.5705                                                 & 0.3491                                                         & 11.6855                                                         & 0.5466                                                     \\
Parabel    & 13.68                                                 & 19.83                                                 & 0.7085                                                 & \textbf{0.3204}                                                & \textbf{3.7275}                                                 & 0.1799                                                     \\
LEML       & 13.05                                                 & 20.06                                                 & 0.5716                                                 & 0.9546                                                         & 54.9470                                                         & 0.5275                                                     \\
1-vs-all-LS        & 21.89                                                 & 26.71                                                 & \textbf{0.5459}                                        & 2.4341                                                         & 129.8342                                                        & 16.9871                                                    \\
RankSVM    & 18.46                                                 & 25.84                                                 & 1.2236                                                 & 4.9631                                                         & 92.2684                                                         & 10.8536                                                    \\
DiSMEC     & 15.61                                                 & 22.43                                                 & 0.7140                                                 & 2.1945                                                         & 13.8993                                                         & \textbf{0.0290}                                            \\
\alg        & 16.94                                                 & 24.97                                                 & 0.5716                                                 & 0.6184                                                         & 3.7649                                                          & 0.3218                                                     \\
\alg-t      & \textbf{22.60}                                        & \textbf{30.55}                                        & 0.5506                                                 & 0.6431                                                         & 5.4910                                                          & 0.9026                                                     \\\midrule \multicolumn{7}{c}{\textbf{WikiLSHTC-325K}}                                                                                                                                                                                                                                                                                                                                                  \\ \midrule
PfastreXML & 28.04                                                 & 36.38                                                 & 0.1437                                                 & 7.1974                                                         & 6.9045                                                          & 13.3096                                                    \\
Parabel    & 37.22                                                 & 41.71                                                 & 0.2459                                                 & \textbf{1.2195}                                                & \textbf{2.2486}                                                 & \textbf{3.0885}                                            \\
DiSMEC    & 39.50                                                & -                                                 & -                                                 & -                                              & -                                                 & -                                          \\
ProXML    & \textbf{41.00}      & -                                                 & -                                                 & -                                              & -                                                 & -                                            \\
\alg        & 36.92                                                 & 41.62                                                 & \textbf{0.1411}                                        & 4.5119                                                         & 3.0312                                                          & 3.5105                                                     \\
\alg-t      & 40.33                                        & \textbf{43.39}                                        & 0.3140                                                 & 3.8552                                                         & 3.0896                                                          & 4.1955                                                     \\ \midrule
\multicolumn{7}{c}{\textbf{Amazon-670K}}                                                                                                                                                                                                                                                                                                                                                     \\ \midrule
PfastreXML & 28.53                                                 & 30.97                                                 & 0.4019                                                 & 3.3143                                                         & 11.4931                                                         & 9.8113                                                     \\
Parabel    & 32.88                                                 & 31.32                                                 & 0.4292                                                 & \textbf{0.5815}                                                & 2.3419                                                          & \textbf{1.9297}                                            \\
DiSMEC     & 34.45                                     & 31.94                                                 & 0.4275                                                 & 373                                                            & 1414                                                            & 3.7500                                                     \\
ProXML    & \textbf{35.10}      & -                                                 & -                                                 & $\approx$1200                                              & $\approx$1000                                                & -                                            \\
\alg        & 33.24                                                 & 34.72                                                 & \textbf{0.3869}                                        & 1.4925                                                         & 2.4633                                                          & 3.4186                                                     \\
\alg-t      & 34.29                                                 & \textbf{35.83}                                        & 0.4473                                                 & 1.1864                                                         & \textbf{2.2242}                                                 & 4.5952                                                     \\\bottomrule
\end{tabular}
		}
		\end{minipage}%
		\begin{minipage}[t]{0.5\linewidth}
		\resizebox{0.95\linewidth}{!}
		{
		\begin{tabular}{@{}lcccccc@{}}\toprule
Method     & \begin{tabular}[c]{@{}c@{}}CTR-p@5 \\ (\%)\end{tabular}   & \begin{tabular}[c]{@{}c@{}}Tau-p@5 \\ (\%)\end{tabular} & \begin{tabular}[c]{@{}c@{}}XMAD-p@5 \\ \end{tabular} & \begin{tabular}[c]{@{}c@{}}Training \\time (hrs)\end{tabular} & \begin{tabular}[c]{@{}c@{}}Test time\\ /point (ms)\end{tabular} & \begin{tabular}[c]{@{}c@{}}Model \\ size (GB)\end{tabular} \\ \midrule
\multicolumn{7}{c}{\textbf{SSA-130K}}                                                                                                                                                                                                                                                                                                                                                        \\ \midrule
PfastreXML & 27.79                                                 & 23.77                                                 & 0.0655                                                 & 1.3765                                                         & 5.2419                                                          & 1.6258                                                     \\
Parabel    & \textbf{32.97}                                        & \textbf{30.25}                                        & 0.1430                                                 & \textbf{0.2283}                                                & 1.9098                                                          & \textbf{0.3625}                                            \\
LEML       & 6.54                                                  & 8.10                                                  & \textbf{0.0654}                                        & 8.3253                                                         & 161.6891                                                        & 1.1308                                                     \\
RankSVM & 13.06 & 14.03 & 2.7871 & 9.6026 & 130.0945 & 7.4834\\
DiSMEC & 32.75 & 29.16 & 0.1562 & 31.4358 & 61.0967 & 0.0802\\
\alg        & 32.39                                                 & 28.27                                                 & 0.0684                                                 & 0.4570                                                         & 7.4715                                                          & 0.7871                                                     \\
\alg-t      & 32.81                                                 & 28.73                                                 & 0.1131                                                 & 0.5049                                                         & \textbf{1.7746}                                                 & 1.4156                                                     \\

\midrule\midrule
Method     & \begin{tabular}[c]{@{}c@{}}Rating-l@5 \\ (\%)\end{tabular}   & \begin{tabular}[c]{@{}c@{}}Tau-l@5 \\ (\%)\end{tabular} & \begin{tabular}[c]{@{}c@{}}XMAD-l@5 \\ \end{tabular} & \begin{tabular}[c]{@{}c@{}}Training \\time  (hrs)\end{tabular} & \begin{tabular}[c]{@{}c@{}}Test time\\ /point (ms)\end{tabular} & \begin{tabular}[c]{@{}c@{}}Model \\ size (GB)\end{tabular} \\ \midrule
\multicolumn{7}{c}{\textbf{YahooMovie-8K}}                                                                                                                                                                                                                                                                                                                                                   \\ \midrule
PfastreXML & 10.18                                                 & 19.72                                                 & 0.6286                                                 & \textbf{0.0241}                                                & 8.5074                                                          & 0.0753                                                     \\
Parabel    & 9.73                                                  & 28.22                                                 & 0.6284                                                 & 0.0299                                                         & \textbf{0.9639}                                                 & 0.1307                                                     \\
LEML       & 21.79                                                 & 28.85                                                 & 0.6408                                                 & 0.0593                                                         & 5.3650                                                          & 0.0586                                                     \\
1-vs-all-LS        & 21.63                                                 & 31.24                                                 & 0.6269                                                 & 0.0740                                                         & 6.8841                                                          & 1.6977                                                     \\
RankSVM    & 24.88                                                 & 33.28                                                 & 1.0579                                                 & 0.1282                                                         & 5.1620                                                          & 0.7172                                                     \\
DiSMEC     & 24.53                                                 & 32.75                                                 & 0.6207                                                 & 0.0337                                                         & 3.4258                                                          & 0.0376                                            \\
XLR       & 4.66                                               & 10.72                                                & 0.6716                                        & -                                                         & 4.7724                                                       & \textbf{0.0293}                                                    \\
\alg        & 25.86                                                 & 35.00                                                 & 0.6248                                                 & 0.0685                                                         & 4.1965                                                          & 0.2829                                                     \\
\alg-t      & \textbf{26.05}                                        & \textbf{35.33}                                        & \textbf{0.6185}                                        & 0.0615                                                         & 3.6353                                                          & 0.4500                                                     \\ \midrule
\multicolumn{7}{c}{\textbf{MovieLens-138K}}                                                                                                                                                                                                                                                                                                                                         \\ \midrule
PfastreXML & 7.25                                                  & 22.84                                                 & 0.9199                                                 & 0.4514                                                         & 19.8270                                                         & 0.1837                                            \\
Parabel    & 3.51                                                  & 37.80                                                 & 0.9200                                                 & 1.7790                                                         & \textbf{1.6132}                                                 & 3.4322                                                     \\
LEML       & 43.19                                                 & 64.78                                                 & 0.8722                                                 & \textbf{0.4186}                                                & 91.4262                                                         & 0.2535                                                     \\
1-vs-all-LS        & 42.16                                                 & 63.92                                                 & 0.8832                                                 & 2.5756                                                         & 121.6169                                                        & 16.1334                                                    \\
DiSMEC     & 45.35                                                 & 61.55                                                 & 0.8857                                                 & 1.5437                                                         & 74.9537                                                         & 1.0514                                                     \\
XLR       & 9.67                                               & 21.42                                                & 0.9134                                        & 4.579                                                         & 68.347                                                       &\textbf{0.0634}                                                    \\
\alg        & 48.94                                                 & 66.99                                                 & 0.8741                                                 & 2.6287                                                         & 7.7996                                                          & 3.6223                                                     \\
\alg-t      & \textbf{49.29}                                        & \textbf{67.36}                                        & \textbf{0.8285}                                        & 2.7437                                                         & 9.8279                                                          & 4.8958                                                     \\ \midrule\midrule
Method     & \begin{tabular}[c]{@{}c@{}}CTR-l@5 \\ (\%)\end{tabular}   & \begin{tabular}[c]{@{}c@{}}Tau-l@5 \\ (\%)\end{tabular} & \begin{tabular}[c]{@{}c@{}}XMAD-l@5 \\ \end{tabular} & \begin{tabular}[c]{@{}c@{}}Training \\time  (hrs)\end{tabular} & \begin{tabular}[c]{@{}c@{}}Test time\\ /point (ms)\end{tabular} & \begin{tabular}[c]{@{}c@{}}Model \\ size (GB)\end{tabular} \\ \midrule
\multicolumn{7}{c}{\textbf{DSA-130K}}                                                                                                                                                                                                                                                                                                                                                        \\ \midrule
PfastreXML & 28.18                                                 & \textbf{34.75}                                        & 0.0422                                                 & 1.3765                                                         & 5.2419                                                          & 1.6258                                                     \\
Parabel    & 33.97                                                 & 28.37                                                 & 0.0891                                                 & \textbf{0.2283}                                                & \textbf{1.9098}                                                 & 0.3625                                            \\
LEML       & 10.36                                                 & 7.70                                                  & \textbf{0.0415}                                        & 8.3253                                                         & 212.1707                                                        & 1.1308                                                     \\
DiSMEC & 34.06 & 27.96 & 0.1039 & 31.4358 & 55.4037 & 0.0802\\
XLR       & 0.09                                               & 0.10                                                & 0.4816                                        &    5.5430                                                      & 64.1134                                                       &\textbf{0.0678}                                                    \\
\alg        & 35.66                                                 & 28.51                                                 & 0.0439                                                 & 0.4570                                                         & 7.4715                                                          & 0.7871                                                     \\
\alg-t      & \textbf{36.32}                                        & 28.45                                                 & 0.0587                                                 & 0.3669                                                         & 8.4376                                                          & 1.3512                                                     \\  \midrule
\multicolumn{7}{c}{\textbf{DSA-1M}} \\ \midrule
Parabel & 37.95 & 30.93 & 0.1004 & \textbf{9.2800} & \textbf{2.5031}& \textbf{5.6774}\\ 
\alg & 37.57 & 31.09 & \textbf{0.0563} &20.7463 &3.1792&11.0178\\
\alg-t &\textbf{38.81} & \textbf{31.41} & 0.0714  &  15.4201  &  3.4036 &   18.7434\\\bottomrule
\end{tabular}
		}
		\end{minipage}
\end{table*}
\vspace{-1mm}
\section{Experiments}
\label{sec:experiments}
{\bf Datasets}: Experiments were carried out on several medium and large scale benchmark datasets with up to 4.9M training points, 1.8M features and 1.4M labels (see Table~\ref{tab:stats} for dataset statistics). These datasets cover diverse applications such as document tagging (BibTeX~\citep{Prabhu14}, EURLex-4K~\citep{Mencia08}, Wiki10-31K~\citep{XMLcode} \& WikiLSHTC-325K~\citep{Bhatia15, Partalas15}), content-based movie recommendation (YahooMovie-8K~\citep{YahooMovies} \& MovieLens-138K~\citep{Harper15, MovieLens}), item-to-item recommendation of Amazon products (Amazon-670K~\citep{Bhatia15, McAuley13}), sponsored search advertising (SSA-130K) and dynamic search advertising (DSA-130K, DSA-1M). For ease of discussion, the label size suffixes are dropped from dataset names hereafter except for DSA. The document tagging, item-to-item recommendation, and SSA datasets require pointwise inference whereas the movie recommendation and DSA datasets require labelwise inference. YahooMovie and MovieLens use normalized (between 0 and 1) user-provided movie ratings as relevance weights and movie meta-data like summary, genres, and tags as features. For all the datasets, bag-of-words feature representation derived from text descriptions are used. SSA and DSA are proprietary datasets that were created by mining the Bing logs. Rest of the datasets are available from~\citep{XRegCode}.

{\bf Baselines}: \alg was compared to leading extreme classifiers such as PfastreXML~\citep{Jain16}, Parabel~\citep{Prabhu18b}, DiSMEC~\citep{Babbar17} and ProXML~\citep{Babbar18}, traditional multivariate regressors such as one-vs-all least-squares regression (1-vs-all-LS) and LEML~\citep{Yu14}, and a popular pairwise ranker, RankSVM~\citep{Lee14b, Herbrich99}. \alg was also compared to the recent eXtreme Learning to Rank (XLR)~\citep{Cheng18} approach. ProXML is the current state-of-the-art over propensity scored precision@$k$ (PSP-p@$k$) during pointwise inference. Results for DiSMEC and ProXML, which required 1000s of cores, could not be replicated on large datasets and hence the numbers from the corresponding papers have been reported directly. RankSVM was unable to scale to datasets larger than SSA-130K and hence required down-sampling of negatives up to 0.1\% on these larger datasets. XLR, which specifically addresses the labelwise recommendation task, has only been applied to labelwise datasets. For the other baselines, results have been reported for only those datasets up to which the implementations scale. Since many of these large-scale datasets have a preponderance of tail labels, results for a variant of \alg where predicted labels have been reranked with tail classifier scores have also been reported with a "-t" suffix. The tail classifiers are generative classifiers which are tailored for accurate predictions on labels with $<5$ training point samples~\citep{Jain16}. For extreme classifiers which train on binary labels (Parabel, DiSMEC, and ProXML), all positive relevance weights were approximated to be fully relevant (value $1$). Remaining baselines, including the PfastreXML and LEML, were trained on relevance weighted labels for a fair comparison.

{\bf Hyperparameters}: \alg has 5 hyperparameters: (a) number of label trees in the ensemble ($T$); (b) number of tree paths explored by a test point during pointwise prediction ($P$); (c) maximum ratio of test to train points that traverse to each node during labelwise prediction ($F$); (d) maximum number of labels in a leaf node of \alg tree ($M$); and (e) regularization parameter common to  logistic regressors in all the internal and leaf node classifiers ($C$). On medium-sized datasets, the \alg's hyperparameters were set by fine-grained tuning over a 10\% validation dataset. On larger datasets where tuning was not feasible, the default hyperparameter setting of $T=3$, $P=10$, $F=4$, $M=100$ and $C=10$ was used. Results in table~\ref{tab:hyper} of the \suppurl~ demonstrates that the above default values of $T,P,M$ achieve the best trade-off between accuracy and scalability across multiple datasets and increasing any of them further leads to minimal gains in accuracy while linearly increasing the training or prediction cost. The value of $\alpha$, which adjusts the influence of tail classifiers in \alg-t, was also tuned on the validation data. The hyperparameters for baseline algorithms were also set by tuning on medium datasets and set to defaults suggested in the respective papers/codebases on larger datasets.

{\bf Metrics and hardware}: Performances were evaluated using accuracy metrics such as WP@$k$ variants, Tau@$k$ and XMAD@$k$ (see Section~\ref{sec:metrics}) as well as efficiency metrics such as training time, test time per data point and model size. Among WP@$k$ variants, for tagging (BibTeX, EURLex, Wiki10, WikiLSHTC) and Amazon datasets PSP@$k$ are reported; for SSA and DSA which are ads datasets CTR@$k$ is reported; and for movie recommendation datasets (YahooMovie and MovieLens) Rating@$k$ is reported. All accuracy metrics are suffixed with "-p" or "-l" depending on whether the prediction scenario is pointwise or labelwise. All experiments were run on an Intel Xeon 2.5 GHz processor with 256 GB RAM.


\begin{table*}[t!]
\centering
\captionsetup{font=small}
	\caption{XMAD@$k$ is a better indicator of the filtering and re-ranking qualities than purely ranking metrics like WP@$k$ or traditional regression metrics like MAD.}
	\label{tab:prcurves}
	\begin{minipage}[t]{0.5\linewidth}
		\resizebox{\linewidth}{!}
		{
\begin{tabular}{@{}lccccc@{}}
\toprule
Method    & \multicolumn{1}{l}{AUPRC} & \begin{tabular}[c]{@{}c@{}}WP-rerank-p\\ @5 (\%)\end{tabular} & \multicolumn{1}{l}{XMAD-p@5} & \multicolumn{1}{l}{MAD} & \begin{tabular}[c]{@{}c@{}}WP-p\\ @5 (\%)\end{tabular} \\ \midrule
\multicolumn{6}{c}{EURLex-4K}                                                                                                                                                                                           \\ \midrule
Parabel   & 0.092                     & 49.67                                                         & 0.4227                       & 3.96                  & 48.29                                                  \\
\alg      & \textbf{0.117}            & \textbf{50.39}                                                & \textbf{0.1849}              & 1.22                  & \textbf{49.72}                                         \\
\alg-zero & 0.085                     & 50.12                                                         & 0.2255                       & \textbf{1.21}         & \textbf{49.72}                                         \\ \midrule
\multicolumn{6}{c}{Wiki10-31K}                                                                                                                                                                                          \\ \midrule
Parabel   & 0.036                    & 21.14                                                         & 0.7084                       & 10.15                 & 13.67                                                 \\
\alg      & \textbf{0.046}           & \textbf{22.60}                                                & \textbf{0.5716}              & 6.01                  & \textbf{16.94}                                       \\
\alg-zero & 0.036                    & 19.20                                                         & 0.5781                       & \textbf{5.61}         & \textbf{16.94}                                       \\ \bottomrule
\end{tabular}
		}
		\end{minipage}%
		\begin{minipage}[t]{0.5\linewidth}
		\resizebox{\linewidth}{!}
		{
\begin{tabular}{@{}lccccc@{}}
\toprule
Method    & \multicolumn{1}{l}{AUPRC} & \begin{tabular}[c]{@{}c@{}}WP-rerank-l\\ @5 (\%)\end{tabular} & \multicolumn{1}{l}{XMAD-p@5} & \multicolumn{1}{l}{MAD} & \begin{tabular}[c]{@{}c@{}}WP-l\\ @5 (\%)\end{tabular} \\ \midrule
\multicolumn{6}{c}{YahooMovie-8K}                                                                                                                                                                                       \\ \midrule
Parabel   & 0.135                     & 10.09                                                         & 0.6283                       & 6.39                  & 9.72                                                   \\
\alg      & \textbf{0.175}            & \textbf{26.03}                                                & \textbf{0.6248}              & 6.78                  & \textbf{25.85}                                         \\
\alg-zero & 0.076                     & 25.93                                                         & 0.6306                       & \textbf{5.83}          & \textbf{25.85}                                         \\ \midrule
\multicolumn{6}{c}{DSA-130K}                                                                                                                                                                                            \\ \midrule
Parabel   & 0.016                     & 34.59                                                         & 0.0890                       & 0.88                  & 33.97                                                  \\
\alg      & \textbf{0.035}            & \textbf{36.32}                                                & 0.0438                       & 0.24                  & \textbf{35.65}                                         \\
\alg-zero & 0.010                     & 35.90                                                          & \textbf{0.0402}              & \textbf{0.20}         & \textbf{35.65}                                         \\ \bottomrule
\end{tabular}
		}
		\end{minipage}
\end{table*}

\begin{table*}[]
\centering
\captionsetup{font=small}
	\caption{Ranking regret at k is up to 69x more closely bounded by 2*XMAD@2k compared to the traditional MAD as proposed in Section~\ref{sec:metrics}. $k = 5$, "-p": pointwise, "-l": labelwise and "-t": use of tail classifiers. Please refer to the text for details.}
	\label{tab:regret}
	\begin{minipage}[t]{0.5\linewidth}
		\resizebox{\linewidth}{!}
		{
\begin{tabular}{@{}lccccc@{}}
\toprule
Method     & \begin{tabular}[c]{@{}c@{}}WP-regret-p\\ @k\end{tabular} & \begin{tabular}[c]{@{}c@{}}2*XMAD-p\\ @2k\end{tabular} & MAD             & \begin{tabular}[c]{@{}c@{}}2*XMAD-p@2k / \\ WP-regret-l@k\end{tabular} & \begin{tabular}[c]{@{}c@{}}MAD / \\ WP-regret-p@k\end{tabular} \\ \midrule
\multicolumn{6}{c}{EURLex-4K}                                                                                                                                                                                                                                                            \\\midrule 
PfastreXML & 0.1237                                                   & 0.2481                                                 & 1.8667          & 2.01                                                                   & 15.09                                                           \\
Parabel    & 0.1166                                                   & 0.5696                                                 & 3.9622          & 4.89                                                                   & 33.98                                                           \\
LEML       & 0.1527                                                   & 0.2739                                                 & 2.0779          & \textbf{1.79}                                                          & 13.61                                                           \\
1-vs-all-LS        & 0.1076                                                   & 0.2468                                                 & 2.696           & 2.29                                                                   & 25.06                                                         \\
RankSVM    & 0.1202                                                   & 2.1303                                                 & 37.7822         & 17.72                                                                  & 314.33                                                         \\
DiSMEC     & 0.1107                                                   & 0.5733                                                 & 4.9883          & 5.18                                                                   & 45.06                                                           \\
\alg        & 0.1134                                                   & \textbf{0.2432}                                        & \textbf{1.2284} & 2.14                                                                   & \textbf{10.83}                                                  \\
\alg-t      & 0.1119                                                   & 0.3141                                                 & 3.4887          & 2.81                                                                   & 31.18                                                          \\ \midrule
\multicolumn{6}{c}{Wiki10-31K}                                                                                                                                                                                        \\ \midrule
PfastreXML                 & 0.4861          & 0.8862          & 9.3561          & 1.82                                                                 & 19.25                                                       \\
Parabel                    & 0.4990          & 1.1784          & 10.1523         & 2.36                                                                 & 20.35                                                        \\
LEML                       & 0.5027          & 0.8886          & 25.5622         & \textbf{1.77}                                                        & 50.85                                                       \\
1-vs-all-LS                & 0.4515          & \textbf{0.8492} & 34.1409         & 1.88                                                                 & 75.62                                                       \\
RankSVM                    & 0.4714          & 2.2960          & 75.4734         & 4.87                                                                 & 160.10                                                       \\
DiSMEC                     & 0.4878          & 1.1519          & 80.3084         & 2.36                                                                 & 164.63                                                       \\
\alg        & 0.4802          & 0.8938          & \textbf{6.0104} & 1.86                                                                 & \textbf{12.52}                                               \\
\alg-t      & \textbf{0.4475} & 0.8741          & 32.2013         & 1.95                                                                 & 71.96                                                       \\ \bottomrule
\end{tabular}
		}
		\end{minipage}%
		\begin{minipage}[t]{0.5\linewidth}
		\resizebox{0.98\linewidth}{!}
		{
\begin{tabular}{@{}lccccc@{}}
\toprule
Method & \begin{tabular}[c]{@{}c@{}}WP-regret-l\\ @k\end{tabular}    & \begin{tabular}[c]{@{}c@{}}2*XMAD-l\\ @2k\end{tabular}      & MAD             & \begin{tabular}[c]{@{}c@{}}2*XMAD-l@2k / \\WP-regret-l@k\end{tabular} & \begin{tabular}[c]{@{}c@{}}MAD /\\ WP-regret-l@k\end{tabular} \\ \midrule
\multicolumn{6}{c}{YahooMovie-8K}                                                                                                                                                                                     \\ \midrule
PfastreXML                 & 0.5665          & 0.8875          & 8.4893          & 1.57                                                                 & 14.99                                                        \\
Parabel                    & 0.5693          & 0.8850          & \textbf{6.3913} & \textbf{1.55}                                                        & \textbf{11.23}                                               \\
LEML                       & 0.4933          & 0.9571          & 36.0738         & 1.94                                                                 & 73.13                                                       \\
1-vs-all-LS                        & 0.4943          & 0.9288          & 47.1887         & 1.88                                                                 & 95.47                                                       \\
RankSVM                    & 0.4738          & 2.0235          & 76.6099         & 4.27                                                                 & 161.69                                                       \\
DiSMEC                     & 0.4760          & 0.9003          & 38.0254         & 1.89                                                                 & 79.89                                                       \\
XLR                        & 0.6013          & 1.0423          & 17.0795         & 1.73                                                                 & 28.40                                                       \\
\alg       & 0.4676          & \textbf{0.8847} & 6.7809          & 1.89                                                                 & 14.50                                                        \\
\alg-t      & \textbf{0.4664} & 0.8964          & 12.1071         & 1.92                                                                 & 25.96                                                       \\ \midrule
\multicolumn{6}{c}{DSA-130K}                                                                                                                                                                                          \\ \midrule
PfastreXML                 & 0.0289          & 0.0500          & 0.3291          & 1.73                                                                 & 11.39                                                        \\
Parabel                    & 0.0266          & 0.1230          & 0.8827          & 4.62                                                                 & 33.18                                                        \\
LEML                       & 0.0361          & \textbf{0.0486} & 0.2828          & \textbf{1.35}                                                        & 7.83                                                        \\
DiSMEC                     & 0.0265	       & 0.1547             &	4.0878 & 5.84&154.26\\
XLR                        & 0.0402          & 0.9157          & 34.429          & 22.78                                                                & 856.44                                                       \\
\alg       & 0.0259          & 0.0519          & \textbf{0.2482} & 2.00                                                                 & \textbf{9.58}                                               \\
\alg-t      & \textbf{0.0256} & 0.0787          & 0.5862          & 3.07                                                                 & 22.90                                                        \\ \bottomrule
\end{tabular}
		}
		\end{minipage}
		\vspace{-2mm}
\end{table*}
\begin{table}[t!]

\centering
\captionsetup{font=small}
		\caption{The ablation study of Parabel leading to per-label \alg-t which clearly outperforms its predecessors on ranking metrics.}
		\label{tab:parabel}
\resizebox{\linewidth}{!}
  {
\begin{tabular}{@{}l|cc|cc|cc@{}}
\toprule
Method                        & \begin{tabular}[c]{@{}c@{}}Rating-l@5 \\ (\%)\end{tabular}   & \begin{tabular}[c]{@{}c@{}}AUC-l@5 \\ (\%)\end{tabular}   & \begin{tabular}[c]{@{}c@{}}CTR-l@5 \\ (\%)\end{tabular} & \begin{tabular}[c]{@{}c@{}}AUC-l@5 \\ (\%)\end{tabular} & \begin{tabular}[c]{@{}c@{}}Rating-l@5 \\ (\%)\end{tabular}   & \begin{tabular}[c]{@{}c@{}}AUC-l@5 \\ (\%)\end{tabular} \\ \midrule
                              & \multicolumn{2}{c}{YahooMovie-8K} & \multicolumn{2}{c}{DSA-130K}   & \multicolumn{2}{c}{MovieLens-138K} \\ \midrule
Parabel-logloss               & 8.99       & 32.00        & 33.58     & 28.46       & 2.36        & 48.20          \\
Pointwise \alg & 9.47       & 34.00         & 34.59     & 27.13    & 3.89        & 52.35      \\
Labelwise \alg & 25.86      & 35.00         & 35.66    & \textbf{28.51}       & 48.94      & 66.99         \\
Labelwise \alg-t & \textbf{26.05}      & \textbf{35.33}         & \textbf{36.32}    & 28.45       & \textbf{49.29}       & \textbf{67.36}          \\\bottomrule
\end{tabular}}
\end{table}
\begin{table}
\vspace{-2mm}
\centering
    \captionsetup{font=small}
		\caption{\alg significantly improves query coverage over the existing ensemble for DSA on Bing. \textit{Note}: Cov: Query Coverage, CY: Click Yield, IY: Impression Yield, BR: Bounce Rate}
	\label{tab:bing}
	\resizebox{\linewidth}{!}
	{
		\begin{tabular}{lccccc}
			\toprule
			Method			& \multicolumn{1}{c}{Relative}	& \multicolumn{1}{c}{Relative}	& \multicolumn{1}{c}{Relative}	& \multicolumn{1}{c}{Relative} & \multicolumn{1}{c}{Relative}	\\
															&  \multicolumn{1}{c}{Cov (\%)}		& \multicolumn{1}{c}{CY (\%)}		& \multicolumn{1}{c}{IY (\%)}   & \multicolumn{1}{c}{CTR (\%)} & \multicolumn{1}{c}{BR (\%)}   \\
			\midrule
			Pointwise \alg				& -		& 105		& 105		& 100 & 100\\
			Labelwise \alg				& \textbf{127}		& \textbf{148}		& \textbf{150}		& 98 &  100	\\
			\bottomrule
		\end{tabular}
	}
\vspace{-3mm}
\end{table}
{\bf Results on benchmark datasets}: Table~\ref{tab:res} compares \alg's performance to diverse baselines on datasets belonging to tagging, recommendation and DSA applications. In terms of prediction accuracy, \alg consistently achieves close to best performance in terms of WP@$5$, Tau@$5$ as well as XMAD@$5$ metrics. In particular, \alg can be up to 2.4\%, 3.89\% and 2x better than all baselines in WP@$5$, Tau@$5$ and XMAD@$5$ respectively. 

On most tagging datasets, \alg scores within 2\% of the state-of-the-art ProXML in terms of the popular PSP@$5$ metric but can be up to 1000x faster during both training and prediction. 

\alg consistently outperforms extreme classifiers like Parabel and DiSMEC which train only on binary labels. In particular, \alg can be up to 9\% and 45\% better than Parabel over pointwise and labelwise datasets in terms of WP@$5$. The larger gains on labelwise datasets are due to pointwise prediction in Parabel which can lead to low label coverage, especially on datasets like MovieLens with only 8K test points but around 140K labels. Owing to similar classifier architectures, \alg can be highly efficient just like Parabel. \alg is at most 3.75x and 4.8x slower during training and prediction and has at most 2.15x the model size as Parabel.

Owing to their high scalability, both Parabel and \alg scale to the largest DSA-1M dataset where none of the other approaches scale. On this dataset, \alg has 50\% smaller XMAD@$5$ than Parabel.

\alg-t denotes the re-ranked \alg where the predicted relevance estimates are combined with tail classifier scores to improve ranking performance over more informative tail labels. \alg-t consistently improves performance over \alg since most XR datasets are dominated by tail labels. \alg-t can be up to 5.66\% and 5.58\% better than \alg in terms of PSP@$5$ and Tau@$5$. However, \alg-t often increases XMAD@$5$ over \alg since tail classifiers are not regressors but are good generative classifiers which and therefore increase regression errors. Since the tail classifiers are extremely efficient to train and the re-ranking step is only applied to a small number (100s) of labels with high relevance estimates from \alg, \alg-t can be very efficient with 1.1, 1.96 and 2.8 times the training time, prediction time and model size as \alg in worst case.

Additional results for WP@k, Tau@k where k=1,3, nDCG@5 and XRMSE@5 are available in the \suppurl.

{\bf Filtering and re-ranking}: The accurate relevance weight estimates that \alg produces can be used for many downstream tasks such as filtering and re-ranking as discussed in Sections~\ref{sec:introduction} and ~\ref{sec:metrics}. Table~\ref{tab:prcurves} reports (1) AUPRC which measures the quality of filtering and (2) WP-rerank@$5$ which measures the quality of reranking with tail classifiers by using the relevance estimates generated by (a) Parabel, (b) \alg and (c) \alg-zero which corrupts \alg's estimates by setting all relevances to almost $0$ while maintaining the same rankings. As can be seen, \alg consistently outperforms Parabel and \alg-zero, both of which have higher regression errors as measured by XMAD@$5$, during both filtering and re-ranking. \alg-zero's results demonstrate that just accurate ranking, as measured by the WP@$5$ column, is not enough for good filtering and re-ranking performance and that low regression errors are also necessary. Furthermore, regression errors measured in terms of traditional MAD are not reliable since MAD is sensitive to the sparsity in relevances and can in fact be lower for corrupted relevances such as in \alg-zero. Figures showing the AUPRC plots can be found in ~\suppurl.

{\bf Analysis of ranking errors and regression metrics}:
Table~\ref{tab:regret} presents the relationship between XMAD \& MAD to the ranking error (WP-regret@k). Table~\ref{tab:regret} shows that, across all the baselines, 2*XMAD@2k is a much better upper bound for WP-regret@k compared to the traditional MAD. Particularly, on regression and classification techniques, 2*XMAD@2k is 1.35-5.84 times the WP-regret@k while MAD can be up to 69x larger than 2*XMAD@2k. In general, ranking baselines (RankSVM, XLR) do not produce good regression values making the ratio of 2*XMAD@2k and MAD to WP-regret@k much higher. Lastly, for the dense score prediction algorithms like 1-vs-all-LS, MAD is significantly high since it sums up the errors across all labels.

{\bf Ablation Studies}: To test the effectiveness of the proposed \alg along with its novel labelwise prediction algorithm, experiments were done to show the boost due to each of the factors. First, the extension of Parabel-logloss to utilize label weights lead to pointwise \alg which improved the ranking metrics up to 1.5\% over Parabel across the 3 labelwise datasets showing that \alg can learn better from relevance weights. Further, when \alg was coupled with the novel labelwise prediction algorithm, the gains were up to 16\%, 1.1\% and 45\% on YahooMovie-8K, DSA-130K, and MovieLens-138K respectively due to higher label coverage. Lastly, the use of tail classifiers with \alg (\alg-t) further increased the ranking performance by up to 0.7\% over labelwise \alg.

{\bf DSA Results}: Table~\ref{tab:res} shows the offline evaluation on DSA-130K and DSA-1M while Table~\ref{tab:bing} showcases the results of the live deployment of labelwise \alg in Bing DSA pipeline. Even though few of the extreme classification techniques could scale to DSA-130K, the live deployment requires the techniques to scale to tens of millions of labels (queries) and data points (ads). In the actual deployment only PfastreXML, Parabel, and \alg were able to scale.

Table~\ref{tab:bing} compares \alg's performance to the existing DSA ensemble, consisting of BM25 information retrieval based algorithm~\citep{Jones00} and PfastreXML when deployed on Bing. Both pointwise and labelwise \alg were deployed and evaluated. Pointwise \alg increased RPM, CY, and IY by $5\%$ while maintaining the CTR and BR. Finally, the labelwise \alg improves query coverage by $27\%$ along with a $48\%$ and $50\%$ increase in click yield and impression yields at a cost of only $2\%$ reduction in CTR.

\vspace{-1mm}
\section{Conclusions}

This paper proposed a new learning paradigm called eXtreme Regression (XR) which provides a scalable solution to many real-world recommendation and ranking problems such as tagging, recommendation, DSA {\it etc.} XR involves learning to accurately predict the numerical relevance weights of an extremely large number of labels with respect to a data point. These weights not only induce an accurate ranking but are also useful for subsequent filtering and re-ranking steps. To effectively solve XR problems, this paper also develops a new evaluation metric called XMAD@$k$ and a new algorithm called XReg. \alg consistently outperforms the state-of-the-art extreme classifiers as well as large-scale regressors and rankers in terms of ranking accuracies and efficiently scales to datasets with millions of data points and labels. Deployment of \alg on DSA in Bing resulted in a relative gain of 27\% in query coverage.

\vspace{-1mm}
\section{Acknowledgements}We are grateful to Kunal Dahiya, Prateek Jain, Nagarajan Natarajan, Deepak Saini and Harsha Vardhan Simhadri for helpful discussions, feedback and computing resources.
\vspace{-1mm}

\begin{small}{
\bibliography{local}}
\end{small}
\newpage
\appendix
\newcommand{\T}{\mathcal{T}}
\algrenewcommand{\algorithmiccomment}[1]{\hskip3em{\scriptsize \# #1}}
\algdef{SE}[DOWHILE]{Do}{doWhile}{\algorithmicdo}[1]{\algorithmicwhile\ #1}%

\begin{algorithm}[t!]
\label{alg:spge}
\caption{\alg Labelwise Prediction}
    \begin{flushleft}
		\textbf{Input:}\\\qquad Test data points $\{\v x_i\}_{i=1}^M$\\\qquad Trained tree $\T$\\\qquad Required no. of most relevant test points per label $N$\\\qquad Fraction of test points relevant for each node $\{frac(n)\}_{n=1}^{|nodes(\T)|}$\\\qquad  Multiplicative factor $F$\myneg{10}\algorithmiccomment{$F*frac(n)$ most relevant test points are passed down to node $n$}\\
		\textbf{Output:}\\\qquad Predicted test points for each label $\{pred(l)\}_{l=1}^L$\\
		\textbf{Initialize:}\\\qquad$points(1)\leftarrow\{1,\cdots,M\}$\myneg{10}\algorithmiccomment{$points(n)$ is set of test points passed to node $n$}\\\qquad$\hat{z}_{i1}=1.0 ~~~\forall i \in \{1,\cdots,M\}$\myneg{10}\algorithmiccomment{All test points visit the root node with probability $1$}\\
	\end{flushleft}
	\begin{algorithmic}
    	\For{$n \in \{1,\cdots,|nodes(\T)|\}$} \algorithmiccomment{Breadth-first exploration of the tree}
    	    \If{$n \in internalnodes(\T)$}
    	        \For{$n' \in children(n)$} \algorithmiccomment{Iterate over children nodes of $n$}
    	            \For{$i \in points(n)$}
    	                \State $\hat{z}_{in'} \leftarrow \hat{z}_{in}*Sigmoid(\v w_{n'}^{\top}\v x_i)$ \myneg{18}\algorithmiccomment{$Sigmoid(x)=\frac{1}{1+\exp(-x)}$}
    	            \EndFor
    	            \State $points(n')\leftarrow$ \Call{retain top}{$\{\hat{z}_{in'}\}_{i \in points(n)},F*frac(n')$}
    	        \EndFor
    		\ElsIf{$n \in leafnodes(\T)$}
    		    \For{$l \in labels(n)$} \algorithmiccomment{Iterate over labels in leaf node $n$}
    		        \For{$i \in points(n)$}
    	                \State $\hat{y}_{il} \leftarrow \hat{z}_{in}*Sigmoid(\v w_l{\top}\v x_i)$
    	            \EndFor
    	            \State $pred(l)\leftarrow$ \Call{retain top}{$\{\hat{y}_{il}\}_{i \in points(n)},N$}
                \EndFor
            \EndIf
    	\EndFor
        \State Note: In case there are multiple trees in ensemble, the probability predictions estimated by all trees are averaged for a each test point, label pair and top $N$ test points are outputted for each label\\
        \Return $\{pred(l)\}_{l=1}^L$
        \\
        \Procedure{retain top}{$\{s_i\}_{i=1}^I,N$}
			\State $R \leftarrow \mbox{argsort}(\{s_i\}_{i=1}^I, \mbox{comparator} \leftarrow s_{i_1}>s_{i_2})$\algorithmiccomment{Sort the test points in decreasing order of node or label probabilities}
			\State $\mathcal{B} \leftarrow  \{R[1],..,R[N]\}$ \\
			\Return $\mathcal{B}$
		\EndProcedure
  \end{algorithmic}
\end{algorithm}


\begin{table*}[t!]
\centering
\captionsetup{font=small}
	\caption{\alg has the best or close to the best ranking and regression performance across all the datasets compared to state-of-the-art extreme classifiers and large-scale regressors and rankers. Re-ranking with tail classifiers (\alg-t) further improves the accuracies. PSP@$k$, CTR@$k$ and Rating@$k$ are variants of WP@$k$ as discussed in Section~\ref{sec:metrics}. "-p": pointwise, "-l": labelwise.}
	\label{tab:auxres}
	\begin{minipage}[t]{0.5\linewidth}
		\resizebox{0.95\linewidth}{!}
		{
		\begin{tabular}{@{}lcccccc@{}}
\toprule
Method     & \begin{tabular}[c]{@{}c@{}}PSP-p@1 \\ (\%)\end{tabular} & \begin{tabular}[c]{@{}c@{}}PSP-p@3 \\ (\%)\end{tabular} & \begin{tabular}[c]{@{}c@{}}Tau-p@1 \\ (\%)\end{tabular} & \begin{tabular}[c]{@{}c@{}}Tau-p@3 \\ (\%)\end{tabular} & \begin{tabular}[c]{@{}c@{}}nDCG-p@5 \\ (\%)\end{tabular} & XRMSE-p@5        \\ \midrule
\multicolumn{6}{c}{\textbf{BibTex}}                                                                                                                                                                                                                                  \\ \midrule
PfastreXML & 52.43                                                   & 53.41                                                   & 41.31                                                   & 48.36                                                 &56.41  & 0.3813          \\
Parabel    & 50.88                                                   & 52.42                                                   & 36.57                                                   & 46.28                                              &54.58     & 0.4104          \\
LEML       & 51.30                                                   & 52.17                                                   & 39.32                                                   & 47.31                                                  &54.10 & 0.3935          \\
1-vs-all-LS        & \textbf{53.50}                                          & \textbf{55.10}                                          & \textbf{39.91}                                          & \textbf{49.31}                                   & \textbf{57.30}       & 0.3834          \\
RankSVM    & 49.31                                                   & 51.79                                                   & 39.22                                                   & 47.07                                                 &54.97  & 0.7228          \\
DiSMEC     & 50.88                                                   & 52.52                                                   & 36.66                                                   & 46.34                                                 &54.54  & 0.4104          \\
ProXML     & 50.10                                                   & 52.00                                                   & -                                                       & -                                                    &-   & -               \\
\alg        & 49.66                                                   & 52.66                                                   & 38.48                                                   & 47.09                                                 & 54.98 & 0.3958          \\
\alg-t      & 49.86                                                   & 53.04                                                   & 38.68                                                   & 47.33                                                 &55.14  & \textbf{0.3805} \\ \midrule
\multicolumn{6}{c}{\textbf{EURLex-4k}}                                                                                                                                                                                                                               \\ \midrule
PfastreXML & 40.16                                                   & 43.07                                                   & 46.97                                                   & 46.50                                                 & 43.64 & 0.2188          \\
Parabel    & 36.36                                                   & 44.04                                                   & 40.96                                                   & 46.03                                                  &44.78 & 0.4673          \\
LEML       & 27.20                                                   & 30.15                                                   & 30.47                                                   & 33.62                                                  &30.73 & 0.2406          \\
1-vs-all-LS        & \textbf{47.02}                                          & \textbf{50.85}                                          & \textbf{54.12}                                          & \textbf{52.45}                     &   \textbf{50.82}                  & \textbf{0.2006} \\
RankSVM    & 39.52                                                   & 43.82                                                   & 51.23                                                   & 49.79                                                &  44.49 & 1.2093          \\
DiSMEC     & 37.58                                                   & 45.92                                                   & 42.32                                                   & 47.56                                                 & 46.73 & 0.4771          \\
ProXML     & 45.20                                                   & 48.50                                                   & -                                                       & -                                                       & -               \\
\alg        & 44.00                                                   & 47.44                                                   & 51.69                                                   & 50.51                                               & 47.99   & 0.2127          \\
\alg-t      & 45.23                                                   & 48.51                                                   & 53.06                                                   & 51.59                                                 &48.89  & 0.2338          \\ \midrule
\multicolumn{6}{c}{\textbf{Wiki10-31K}}                                                                                                                                                                                                                              \\ \midrule
PfastreXML & 12.94                                                   & 14.80                                                   & 11.93                                                   & 17.53                                               &  15.13  & 0.5925          \\
Parabel    & 11.66                                                   & 12.73                                                   & 13.36                                                   & 17.18                                                 & 13.13  & 0.7201          \\
LEML       & 11.25                                                   & 12.38                                                   & 15.29                                                   & 18.40                                               & 12.58    & 0.5938          \\
1-vs-all-LS        & \textbf{26.78}                                          & 23.06                                                   & 36.59                                                   & 29.50                                        &23.01           & \textbf{0.5691} \\
RankSVM    & 21.06                                                   & 18.99                                                   & 32.63                                                   & 27.58                                            &19.05       & 1.2279          \\
DiSMEC     & 11.91                                                   & 14.09                                                   & 14.41                                                  &  19.47                                             &14.63     & 0.7140          \\
\alg        & 17.33                                                   & 16.73                                                   & 26.76                                                   & 25.01                                               &16.98    & 0.5931          \\
\alg-t      & 25.92                                                   & \textbf{23.56}                                          & \textbf{38.72}                                          & \textbf{33.09}                                      &\textbf{23.40}    & 0.5722          \\ \midrule
\multicolumn{6}{c}{\textbf{WikiLSHTC-325K}}                                                                                                                                                                                                                          \\ \midrule
PfastreXML & 25.67                                                   & 26.57                                                   & 31.11                                                   & 34.15                                            &27.09       & 0.1922          \\
Parabel    & 26.71                                                   & 33.16                                                   & 28.05                                                   & 36.99                                               &33.48    & 0.3130          \\
DiSMEC     & 29.10                                                   & 35.60                                                   & -                                                       & -                                                    &-   & -               \\
ProXML     & 34.80                                                   & 37.70                                                   & -                                                       & -                                          &-             & -               \\
\alg        & 32.36                                                   & 34.36                                                   & 36.59                                                   & 39.10                                       & 35.13            & \textbf{0.1877} \\
\alg-t      & \textbf{36.85}                                          & \textbf{37.98}                                          & \textbf{41.41}                                          & \textbf{41.96}                                  &\textbf{38.87}        & 0.3241          \\ \midrule
\multicolumn{6}{c}{\textbf{Amazon-670K}}                                                                                                                                                                                                                             \\ \midrule
PfastreXML & 24.52                                                   & 26.65                                                   & 28.18                                                   & 29.30                                               &27.36    & 0.4356          \\
Parabel    & 25.43                                                   & 29.45                                                   & 20.54                                                   & 26.56                                              &30.72     & 0.4640          \\
DiSMEC     & 25.82                                                   & 30.20                                                   & 20.40                                                   & 26.77                                            &31.89       & 0.4582          \\
ProXML     & 30.80                                                   & 32.80                                                   & -                                                       & -                                                    &-   & -               \\
\alg        & 29.12                                                   & 31.19                                                   & 31.69                                                   & 32.63                                        &32.01           & \textbf{0.4189} \\
\alg-t      & \textbf{31.16}                                          & \textbf{32.71}                                          & \textbf{33.83}                                          & \textbf{34.28}                                   &\textbf{33.34}       & 0.4639          \\ \bottomrule
\end{tabular}
		}
		\end{minipage}%
		\begin{minipage}[t]{0.5\linewidth}
		\resizebox{0.95\linewidth}{!}
		{
		\begin{tabular}{@{}lcccccc@{}}
\toprule
Method     & \begin{tabular}[c]{@{}c@{}}CTR-p@1 \\ (\%)\end{tabular} & \begin{tabular}[c]{@{}c@{}}CTR-p@3 \\ (\%)\end{tabular} & \begin{tabular}[c]{@{}c@{}}Tau-p@1 \\ (\%)\end{tabular} & \begin{tabular}[c]{@{}c@{}}Tau-p@3 \\ (\%)\end{tabular} & \begin{tabular}[c]{@{}c@{}}nDCG-p@5 \\ (\%)\end{tabular} & XRMSE-p@5       \\ \midrule
\multicolumn{6}{c}{\textbf{SSA-130K}}                                                                                                                                                                                                                                \\ \midrule
PfastreXML & 21.34                                                   & 25.24                                                   & 22.33                                                   & 23.1                                                &25.56    & \textbf{0.0817} \\
Parabel    & 21.95                                                   & 28.87                                                   & 26.98                                                   & \textbf{28.83}                                      &29.22    & 0.1636          \\
LEML       & 3.79                                                    & 5.11                                                    & 7.2                                                     & 7.61                                               &5.50     & 0.0835          \\
RankSVM & 8.92 & 10.96 & 13.14 & 13.74 & 11.51 & 2.7945 \\
DiSMEC & 21.36 & 28.41 & 25.68 & 27.64 & 28.85 & 0.1746 \\
\alg        & 24.69                                                   & 29.02                                                   & 27.52                                                   & 27.71                                            &29.64       & 0.0826          \\
\alg-t      & \textbf{24.7}                                           & \textbf{29.22}                                          & \textbf{27.32}                                          & 27.83                                             &\textbf{29.93}      & 0.1225          \\ \midrule\midrule
Method     & \begin{tabular}[c]{@{}c@{}}Rating-l@1 \\ (\%)\end{tabular}   & \begin{tabular}[c]{@{}c@{}}Rating-l@3 \\ (\%)\end{tabular}   & \begin{tabular}[c]{@{}c@{}}Tau-l@1 \\ (\%)\end{tabular} & \begin{tabular}[c]{@{}c@{}}Tau-l@3 \\ (\%)\end{tabular} & \begin{tabular}[c]{@{}c@{}}nDCG-l@5 \\ (\%)\end{tabular} & XRMSE-l@5       \\ \midrule
\multicolumn{6}{c}{\textbf{YahooMovie-8K}}                                                                                                                                                                                                                           \\
PfastreXML & 11.5                                                    & 9.9                                                     & 22.29                                                   & 20.21                                             &10.36      & 0.7047          \\
Parabel    & 11.28                                                   & 9.73                                                    & 30.11                                                   & 29.08                                              &10.03     & 0.7054          \\
LEML       & 21.33                                                   & 21.06                                                   & 28.81                                                   & 29.5                                                &21.55    & 0.6851          \\
1-vs-all-LS        & 22.75                                                   & 21.02                                                   & 34.9                                                    & 32.59                                &21.76                   & 0.6791          \\
RankSVM    & 24.89                                                   & 23.16                                                   & 36.99                                                   & 34.7                                               &24.53     & 1.0613          \\
DiSMEC     & 23.76                                                   & 23.19                                                   & 34.62                                                   & 33.45                                              &24.10     & 0.6826          \\
XLR &        3.87 & 4.2 & 12.44 & 11.78 & 4.40&0.7182                 \\
\alg        & 26.49                                                   & 24.77                                                   & 39.02                                                   & 35.8                                                 &25.76   & 0.6944          \\
\alg-t      & \textbf{26.53}                                          & \textbf{24.86}                                          & \textbf{39.28}                                          & \textbf{36.42}                                     &\textbf{25.90}     & \textbf{0.6772} \\ \midrule
\multicolumn{6}{c}{\textbf{MovieLens-138K}}                                                                                                                                                                                                                          \\ \midrule
PfastreXML & 9.03                                                    & 7.82                                                    & 25.92                                                   & 23.88                                               &7.63    & 0.9253          \\
Parabel    & 5.95                                                    & 4.25                                                    & 40.67                                                   & 39.08                                               &4.03    & 0.9254          \\
LEML       & 46.51                                                   & 44.89                                                   & 69.58                                                   & 66.96                                               &43.97    & 0.8773          \\
1-vs-all-LS        & 46.94                                                   & 43.88                                            &43.17       & 69.28                                                   & 65.92                                                   & 0.8882          \\
DiSMEC     & 50.85                                                   & 47.05                                                   & 65.45                                                   & 62.93                                             &46.49      & 0.8909          \\
XLR &        14.49 & 10.31 & 31.61 & 21.5 & 10.55& 0.9184                 \\
\alg        & 54.65                                                   & 50.83                                                   & 71.59                                                   & 68.83                                               & 50.16    & 0.8793          \\
\alg-t      & \textbf{55.04}                                          & \textbf{51.21}                                          & \textbf{72.07}                                          & \textbf{69.3}                                   & \textbf{50.52}        & \textbf{0.8337} \\ \midrule\midrule
Method     & \begin{tabular}[c]{@{}c@{}}CTR-l@1 \\ (\%)\end{tabular} & \begin{tabular}[c]{@{}c@{}}CTR-l@3 \\ (\%)\end{tabular} & \begin{tabular}[c]{@{}c@{}}Tau-l@1 \\ (\%)\end{tabular} & \begin{tabular}[c]{@{}c@{}}Tau-l@3 \\ (\%)\end{tabular} & \begin{tabular}[c]{@{}c@{}}nDCG-l@5 \\ (\%)\end{tabular} & XRMSE-l@5       \\ \midrule
\multicolumn{6}{c}{\textbf{DSA-130K}}                                                                                                                                                                                                                                \\ \midrule
PfastreXML & 18.15                                                   & 23.7                                                    & \textbf{26.77}                                          & \textbf{30.99}                                     &23.93     & \textbf{0.0647} \\
Parabel    & 19.97                                                   & 28.06                                                   & 23.44                                                   & 26.04                                         &28.13          & 0.1091          \\
LEML       & 3.94                                                    & 6.9                                                     & 5.35                                                    & 6.46                                            &7.54        & 0.0657          \\
XLR &        0.03 & 0.07 & 0.1 & 0.1 & 0.07& 0.4837                 \\
DiSMEC & 18.94 & 27.52 & 22.20 & 25.25 & 27.70 & 0.1201 \\ 
\alg        & 22.07                                                   & 29.73                                                   & 23.73                                                   & 26.08                                            &29.95       & 0.0654          \\
\alg-t      & \textbf{22.41}                                          & \textbf{30.1}                                           & 23.98                                                   & 26.13                                          &\textbf{30.43}         & 0.0744          \\ \midrule
\multicolumn{7}{c}{\textbf{DSA-1M}} \\ \midrule
Parabel & 25.78 & 33.15 & 27.93 & 29.38 & 33.28& 0.1218\\ 
\alg & 26.75 & 33.06 & 28.67 & 29.51 & 33.36 & \textbf{0.0806}\\
\alg-t & \textbf{27.83} & \textbf{34.27} & \textbf{29.68} &\textbf{30.18} &\textbf{34.55}& 0.0892\\\bottomrule
\end{tabular}
		}
		\end{minipage}
\end{table*}

\begin{table*}[t!]
\centering
\captionsetup{font=small}
	\caption{Hyperparameter tuning for \# trees ($T$), Max leaf labels ($M$), Beam width ($P$) and points reaching leaf node per label in labelwise prediction of \alg. Note: The hyperparameters in bold face are finally chosen for the default setting.}
	\label{tab:hyper}
	\begin{minipage}[t]{0.5\linewidth}
		\resizebox{0.95\linewidth}{!}
		{
		\begin{tabular}{@{}ccccccc@{}}
\toprule
\begin{tabular}[c]{@{}c@{}}\# \\ trees\end{tabular} & \begin{tabular}[c]{@{}c@{}}WP@5\\ (\%)\end{tabular} & \begin{tabular}[c]{@{}c@{}}Tau@5\\ (\%)\end{tabular} & XMAD@5 & \begin{tabular}[c]{@{}c@{}}Training\\  time (hrs)\end{tabular} & \begin{tabular}[c]{@{}c@{}}Test time\\ /point (ms)\end{tabular} & \begin{tabular}[c]{@{}c@{}}Model\\ Size (GB)\end{tabular} \\ \midrule
\multicolumn{7}{c}{\textbf{EURLex-4K (pointwise)}}                                                                                                                                                                                                                                                                                                                       \\ \midrule
1                                                   & 48.05                                               & 51.14                                                & 0.1899 & 0.0307                                                         & 0.3911                                                          & 0.0125                                                    \\
\textbf{3}                                                   & 49.72                                               & 52.86                                                & 0.1849 & 0.0642                                                         & 1.2899                                                          & 0.0378                                                    \\
5                                                   & 50.25                                               & 53.24                                                & 0.1836 & 0.0995                                                         & 2.638                                                           & 0.0629                                                    \\ 
7 & 50.44 & 53.42 & 0.1831 & 0.1543 & 3.4703 & 0.0881\\\midrule
\multicolumn{7}{c}{\textbf{Amazon-670K (pointwise)}}                                                                                                                                                                                                                                                                                                                     \\ \midrule
1                                                   & 30.50                                               & 32.31                                                & 0.3956 & 0.6788                                                         & 0.6551                                                          & 1.1478                                                    \\
\textbf{3}                                                   & 33.24                                               & 34.72                                                & 0.3869 & 1.4925                                                         & 2.4633                                                          & 3.4186                                                    \\
5                                                   & 34.00                                               & 35.45                                                & 0.3847 & 2.1499                                                         & 6.9283                                                          & 5.6978                                                    \\ 
7 & 34.37 & 35.86 & 0.3837 & 3.4298 & 8.564 & 7.9768\\\midrule
\multicolumn{7}{c}{\textbf{DSA-130K (labelwise)}}                                                                                                                                                                                                                                                                                                                        \\ \midrule
1                                                   & 33.64                                               & 27.52                                                & 0.0448 & 0.2165                                                         & 1.8552                                                          & 0.2624                                                    \\
\textbf{3}                                                   & 35.66                                               & 28.51                                                & 0.0439 & 0.4570                                                         & 7.4715                                                          & 0.7871                                                    \\
5                                                   & 36.24                                               & 28.94                                                & 0.0436 & 0.6836                                                         & 16.3785                                                         & 1.3117                                                    \\ 
7 & 36.55 & 29.2 & 0.0435 & 1.0897 & 21.7585 & 1.8358\\\midrule \midrule
\begin{tabular}[c]{@{}c@{}}Beam\\ width\end{tabular} & \begin{tabular}[c]{@{}c@{}}WP@5\\ (\%)\end{tabular} & \begin{tabular}[c]{@{}c@{}}Tau@5\\ (\%)\end{tabular} & XMAD@5 & \begin{tabular}[c]{@{}c@{}}Training\\  time (hrs)\end{tabular} & \begin{tabular}[c]{@{}c@{}}Test time\\ /point (ms)\end{tabular} & \begin{tabular}[c]{@{}c@{}}Model\\ Size (GB)\end{tabular} \\ \midrule
\multicolumn{7}{c}{\textbf{EURLex-4K (pointwise)}}                                                                                                                                                                                                                                                                                                                        \\ \midrule
5                                                    & 49.6                                                & 52.76                                                & 0.1858 & 0.0627                                                         & 0.6869                                                          & 0.0378                                                    \\
\textbf{10}                                                   & 49.72                                               & 52.86                                                & 0.1849 & 0.0642                                                         & 1.2899                                                          & 0.0378                                                    \\
20                                                   & 49.7                                                & 52.86                                                & 0.1847 & 0.0638                                                         & 2.4982                                                          & 0.0378                                                    \\ 
30 & 49.71 & 52.86 & 0.1847 & 0.0682 & 3.8542 & 0.0378\\\midrule
\multicolumn{7}{c}{\textbf{Amazon-670K (pointwise)}}                                                                                                                                                                                                                                                                                                                        \\ \midrule
5                                                    & 32.77                                               & 34.37                                                & 38.77  & 1.3654                                                         & 1.4467                                                          & 3.4186                                                    \\
\textbf{10}                                                 & 33.24                                               & 34.72                                                & 0.3869 & 1.4925                                                         & 2.4633                                                          & 3.4186                                                    \\
20                                                   & 33.38                                               & 34.82                                                & 0.3866 & 1.3721                                                         & 4.8842                                                          & 3.4186                                                    \\ 
30 & 33.4 & 34.83 & 0.3866 & 1.5508 & 9.1728 & 3.4186\\\bottomrule
\end{tabular}
		}
		\end{minipage}%
		\begin{minipage}[t]{0.5\linewidth}
		\resizebox{0.95\linewidth}{!}
		{
		\begin{tabular}{@{}ccccccc@{}}
\toprule
\begin{tabular}[c]{@{}c@{}}Max leaf\\ labels\end{tabular} & \begin{tabular}[c]{@{}c@{}}WP@5\\ (\%)\end{tabular} & \begin{tabular}[c]{@{}c@{}}Tau@5\\ (\%)\end{tabular} & XMAD@5  & \begin{tabular}[c]{@{}c@{}}Training\\  time (hrs)\end{tabular} & \begin{tabular}[c]{@{}c@{}}Test time\\ /point (ms)\end{tabular} & \begin{tabular}[c]{@{}c@{}}Model\\ Size (GB)\end{tabular} \\ \midrule
\multicolumn{7}{c}{\textbf{EURLex-4K (pointwise)}}                                                                                                                                                                                                                                                                                                                              \\ \midrule
20                                                        & 49.34                                               & 52.75                                                & 0.1845  & 0.0323                                                         & 0.4694                                                          & 0.0494                                                    \\

50 & 49.6 & 52.71 & 0.1859 & 0.0458 & 0.8198 & 0.0428\\
\textbf{100}                                                       & 49.72                                               & 52.86                                                & 0.1849  & 0.0642                                                         & 1.2899                                                          & 0.0378                                                    \\
200                                                       & 50.11                                               & 53.05                                                & 0.1846  & 0.1031                                                         & 2.6368                                                          & 0.0337                                                    \\ \midrule
\multicolumn{7}{c}{\textbf{Amazon-670K (pointwise)}}                                                                                                                                                                                                                                                                                                                            \\ \midrule
20                                                        & 32.26                                               & 33.96                                                & 0.3869  & 0.7514                                                         & 0.7051                                                          & 6.0288                                                    \\
50 & 32.89 & 34.46 & 0.3868 & 1.1171 & 1.7382 & 4.124\\
\textbf{100}                                                       & 33.24                                               & 34.72                                                & 0.3869  & 1.4925                                                         & 2.4633                                                          & 3.4186                                                    \\
200                                                       & 33.56                                               & 34.99                                                & 0.3866  & 2.1426                                                         & 4.3993                                                          & 2.9268                                                    \\ \midrule
\multicolumn{7}{c}{\textbf{DSA-130K (labelwise)}}                                                                                                                                                                                                                                                                                                                               \\ \midrule
20                                                        & 34.76                                               & 28.2                                                 & 0.0448  & 0.2678                                                         & 2.279                                                           & 0.9867                                                    \\
50 & 35.21 & 28.42 & 0.0443 & 0.3516 & 4.8664 & 0.8764\\
\textbf{100}                                                       & 35.66                                               & 28.51                                                & 0.0439  & 0.4570                                                         & 7.4715                                                          & 0.7871                                                    \\
200                                                       & 35.96                                               & 28.63                                                & 0.04351 & 0.6925                                                         & 8.9019                                                          & 0.7128                                                    \\ \midrule \midrule
\begin{tabular}[c]{@{}c@{}}\# per-label\\ points\end{tabular} & \begin{tabular}[c]{@{}c@{}}WP@5\\ (\%)\end{tabular} & \begin{tabular}[c]{@{}c@{}}Tau@5\\ (\%)\end{tabular} & XMAD@5 & \begin{tabular}[c]{@{}c@{}}Training\\  time (hrs)\end{tabular} & \begin{tabular}[c]{@{}c@{}}Test time\\ /point (ms)\end{tabular} & \begin{tabular}[c]{@{}c@{}}Model\\ Size (GB)\end{tabular} \\ \midrule
\multicolumn{7}{c}{\textbf{DSA-130K (labelwise)}}                                                                                                                                                                                                                                                                                                                        \\ \midrule
5 & 35.56 & 28.41 & 0.0438 & 0.4989 & 4.3407 & 0.7871 \\
\textbf{10} & 35.66 & 28.51 & 0.0439 & 0.4570 & 7.4715 & 0.7871 \\
20 & 35.68 & 28.54 & 0.0438 & 0.5164 & 11.1622 & 0.7871 \\
30 & 35.68 & 28.54 & 0.0438 & 0.5135 & 11.5373 & 0.7871 \\\bottomrule

\end{tabular}
		}
		\end{minipage}
\end{table*}
\begin{figure*}[t!]
\centering\hspace*{-2ex}
\begin{tabular}{ccc}
  \includegraphics[width=.44\textwidth, height=0.22\textwidth]{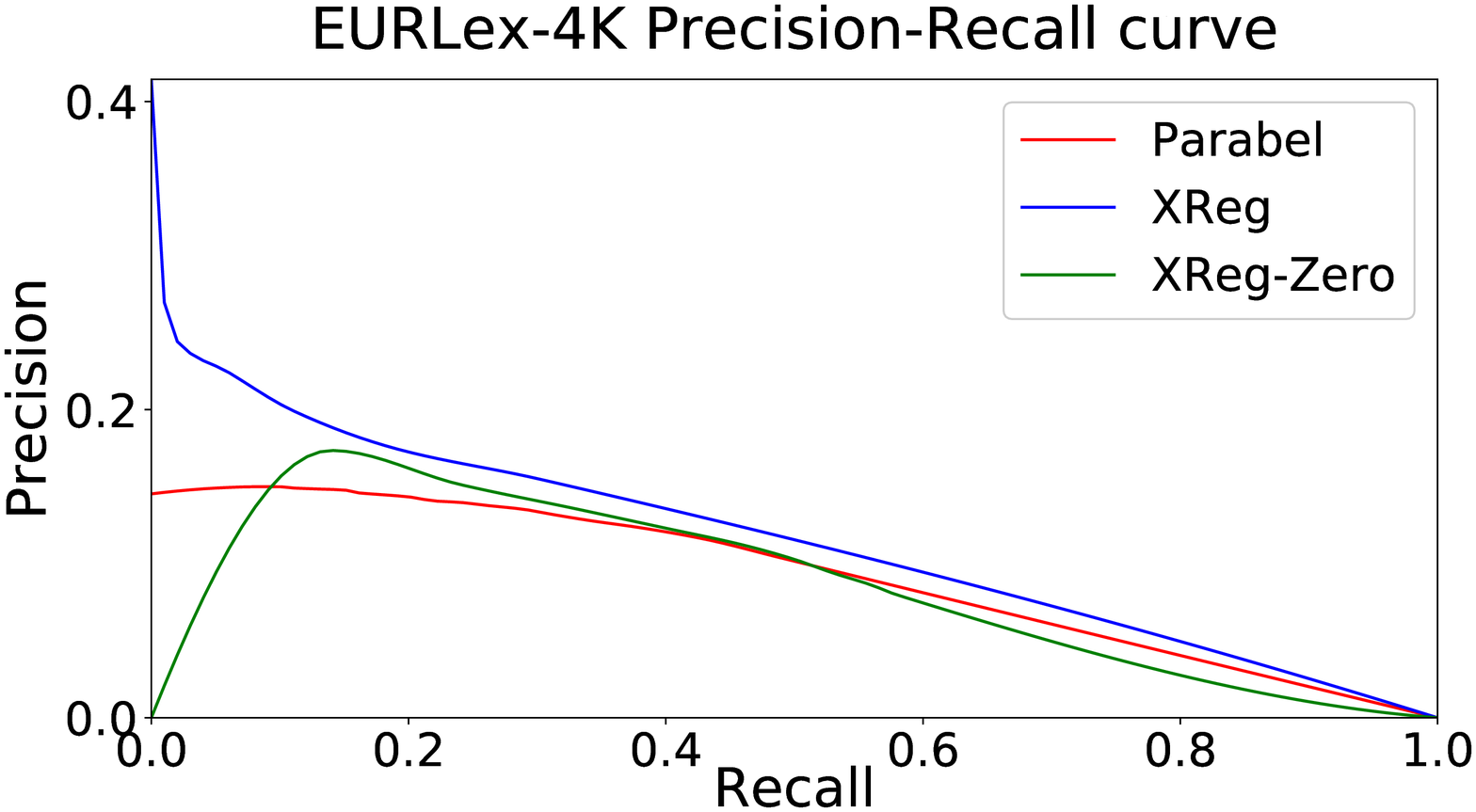}&\hspace{-2ex}
   \includegraphics[width=.44\textwidth, height=0.22\textwidth]{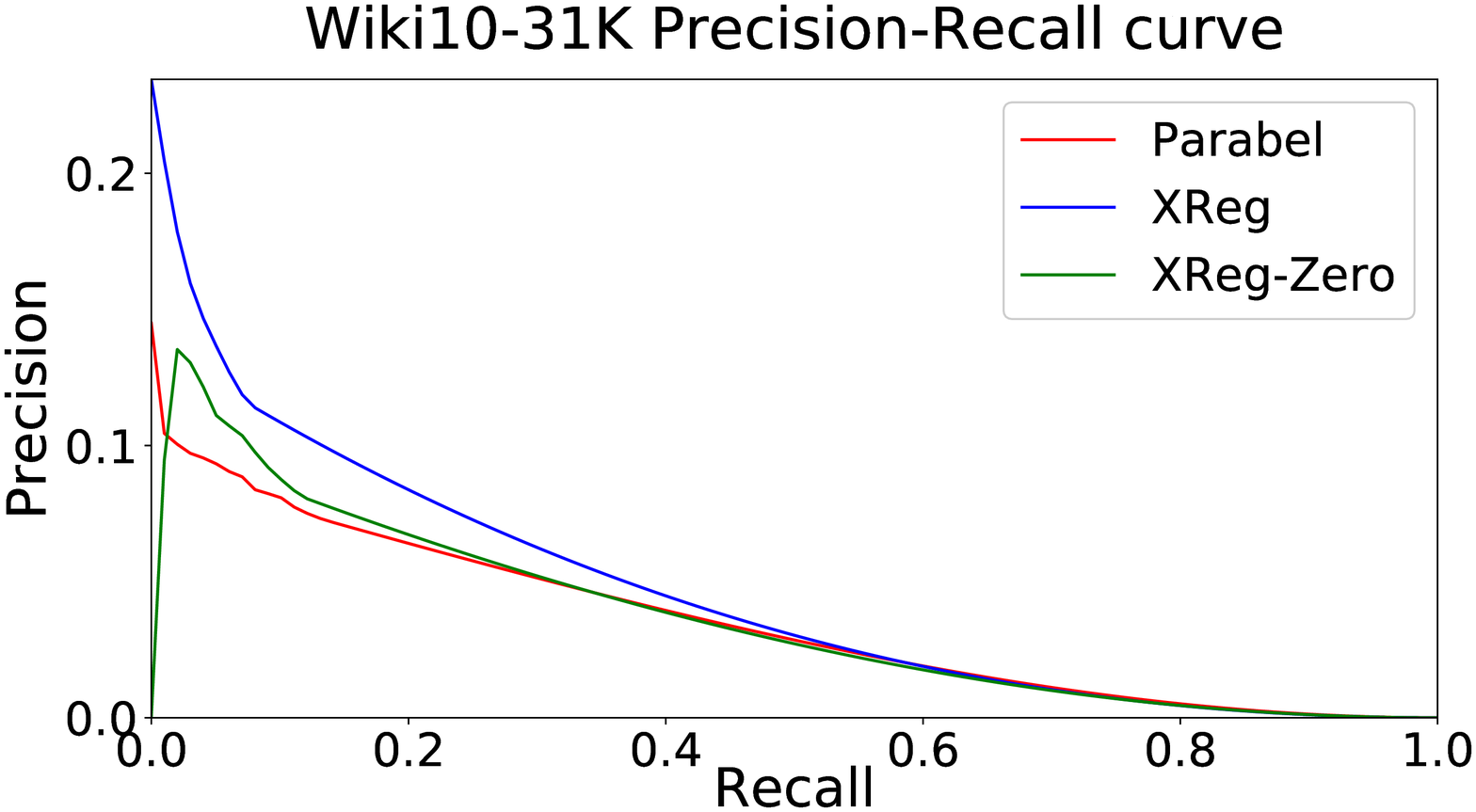}&\hspace{-2ex}\\
  \includegraphics[width=.44\textwidth, height=0.22\textwidth]{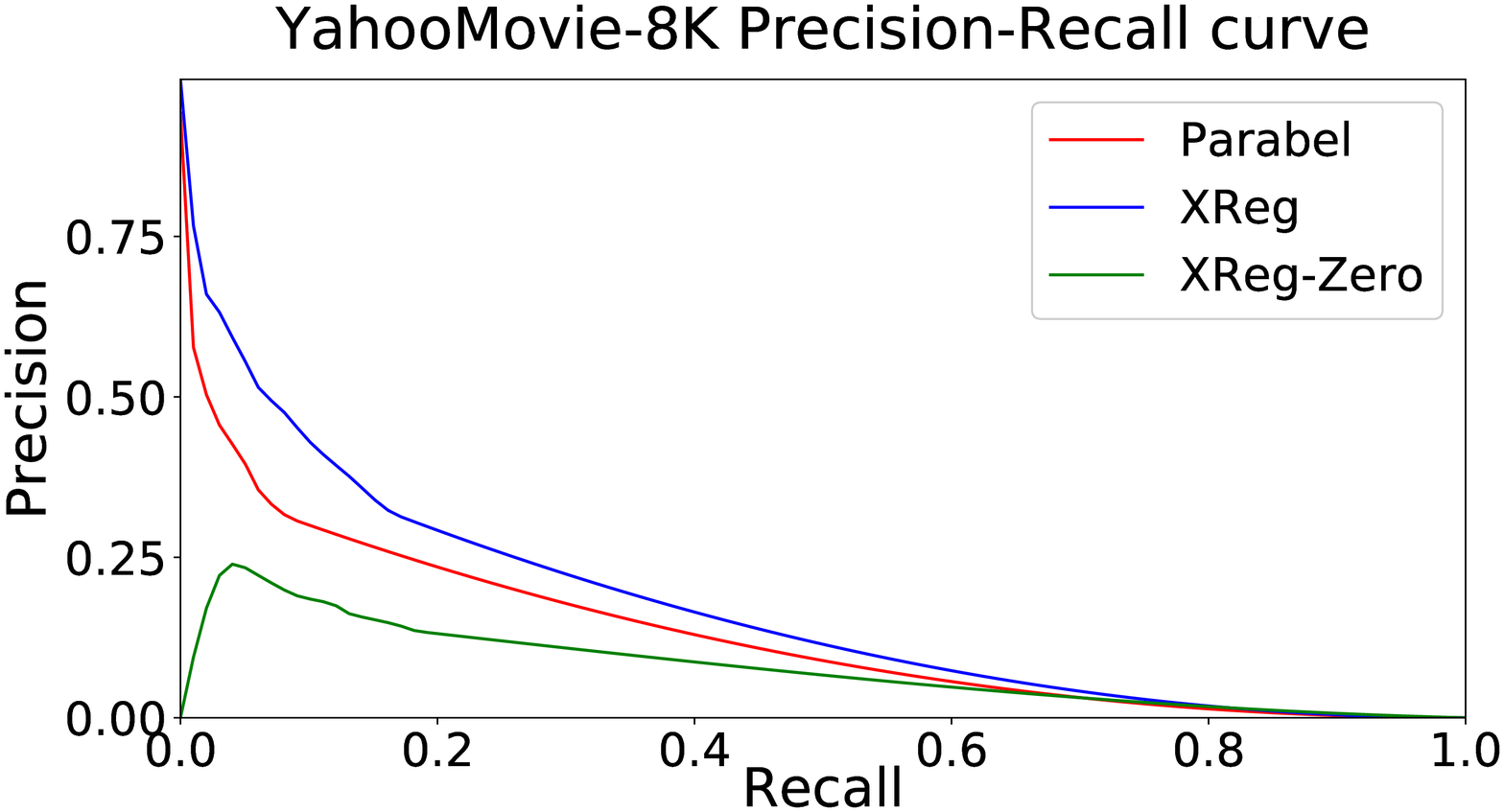}&\hspace{-2ex}
   \includegraphics[width=.44\textwidth, height=0.22\textwidth]{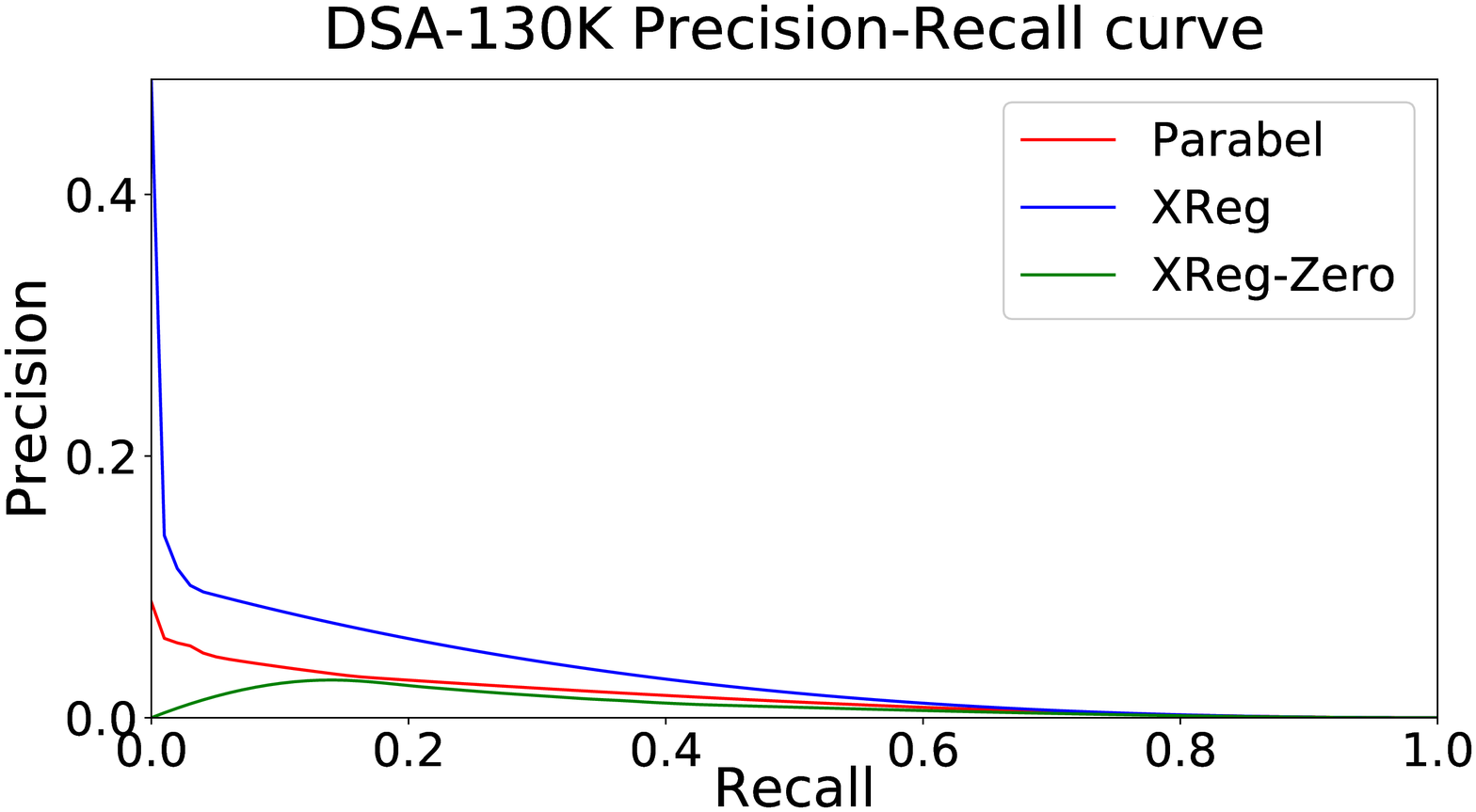}&\hspace{-2ex}\\
\end{tabular}
\captionsetup{font=small}
\vspace{-1mm}
\caption{Precision-Recall curves showing that \alg is consistently better than \alg-Zero and Parabel approaches for precision recall tradeoff.}
  \label{fig:prcurves}
\end{figure*}

\section{Theorems and Proofs}

\begin{lemma}
Given any true and predicted relevance weight vectors $\v y,\hat{\v y} \in [0,\infty)^L$, the following inequality hold true:
\begin{align}
    \label{lem:main}
    &0 \le \frac{M}{2} \le \text{XMAD@}2k(\hat{\v y}_i,\v y_i) \le \text{XRMSE@}2k(\hat{\v y}_i,\v y_i)\\
    &\text{with,}~M = max(\text{Ranking-error@}k,\text{Regression-error@}k)
\end{align}
\end{lemma}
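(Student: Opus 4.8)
The plan is to prove the chain left-to-right, handling the non-negativity, the middle bound (which splits into a ranking part and a regression part), and the final mean inequality as three independent pieces. Throughout I write $A=S(\v y,k)$ and $\hat A=S(\hat{\v y},k)$ for the true and predicted top-$k$ index sets, and $e_l=|\hat y_l-y_l|$ for the per-label errors. The leftmost inequality $0\le M/2$ is immediate: the ranking error is a gap between an optimal and an attained weighted precision and is therefore non-negative, and the regression error is built from the non-negative quantities $e_l$, so their maximum $M$ is non-negative.

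The crux is the middle bound $M/2\le\text{XMAD@}2k$, which I would obtain by dominating \emph{each} argument of the $\max$ by $2\cdot\text{XMAD@}2k$. For the ranking argument (WP-regret@$k$), I would write $\text{WP-regret@}k=\frac1k\bigl(\sum_{l\in A}y_l-\sum_{l\in\hat A}y_l\bigr)=\frac1k\bigl(\sum_{l\in A\setminus\hat A}y_l-\sum_{l\in\hat A\setminus A}y_l\bigr)$, where $m=|A\setminus\hat A|=|\hat A\setminus A|\le k$ since the two sets have equal size. On the missed labels I use $y_l\le\hat y_l+e_l$ and on the false positives $y_l\ge\hat y_l-e_l$, which gives $\text{WP-regret@}k\le\frac1k\bigl[\sum_{l\in A\setminus\hat A}\hat y_l-\sum_{l\in\hat A\setminus A}\hat y_l\bigr]+\frac1k\sum_{l\in A\triangle\hat A}e_l$. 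The bracketed $\hat y$-term is non-positive: every label of $\hat A\setminus A$ lies in the predicted top-$k$ while every label of $A\setminus\hat A$ lies outside it, so each predicted value in the subtracted sum dominates each predicted value in the added sum and, the two sets being size-matched, the difference is $\le 0$. This leaves $\text{WP-regret@}k\le\frac1k\sum_{l\in A\triangle\hat A}e_l$, and since $|A\triangle\hat A|=2m\le 2k$, the sum over this set is at most the sum of the $2k$ largest errors, i.e. $\frac1k\cdot 2k\cdot\text{XMAD@}2k=2\,\text{XMAD@}2k$.

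For the regression argument, taking the regression error to be $\text{XMAD@}k$ (the average of the $k$ largest errors), the bound $\text{XMAD@}k\le 2\,\text{XMAD@}2k$ is pure set containment: the $k$ largest errors form a subset of the $2k$ largest, so $\sum_{l\in S(\v e,k)}e_l\le\sum_{l\in S(\v e,2k)}e_l$, and dividing by $k$ while rewriting the right side as $2\cdot\frac1{2k}\sum_{l\in S(\v e,2k)}e_l$ yields the claim. The rightmost inequality $\text{XMAD@}2k\le\text{XRMSE@}2k$ is the arithmetic-mean/quadratic-mean inequality on the $2k$ largest errors: by Cauchy--Schwarz $\bigl(\sum e_l\bigr)^2\le 2k\sum e_l^2$, hence $\bigl(\frac1{2k}\sum e_l\bigr)^2\le\frac1{2k}\sum e_l^2$, and taking square roots closes the chain.

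I expect the ranking part to be the main obstacle, specifically the cancellation step that discards the predicted-relevance contributions. That step is where the size-matched domination between $\hat A\setminus A$ and $A\setminus\hat A$ is essential and is the only place where both the factor $2$ and the shift from $k$ to $2k$ genuinely originate. The non-negativity, the regression containment, and the mean inequality are all routine and would be stated briefly.
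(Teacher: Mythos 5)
Your proposal is correct and follows essentially the same route as the paper: the ranking bound rests on the same cancellation trick with the predicted relevances (using that $S(\hat{\v y},k)$ maximizes the sum of $\hat y_l$ over size-$k$ sets, so the leftover $\hat y$-difference is non-positive), the remaining error terms over at most $2k$ labels are absorbed into the $2k$ largest errors, and the final step is the same arithmetic-mean/quadratic-mean inequality. The only divergence is cosmetic: the paper defines Regression-error@$k$ as the mean error over the \emph{predicted} top-$k$ labels rather than over the $k$ largest errors, but since that sum is dominated by the sum of the $k$ largest errors your containment argument still covers it (the paper simply declares this part straightforward and omits it).
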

\begin{proof}
The ranking and regression errors are defined as follows
\begin{align}
    \label{eqn:pregretbound}
    &\text{Ranking-error@}k(\hat{\v y}_i,\v y_i) = \frac{1}{k}\sum_{l \in S(\v y_i,k)} y_{il} - \frac{1}{k}\sum_{l \in S(\hat{\v y}_i,k)} y_{il} \\
    &\text{Regression-error@}k(\hat{\v y}_i,\v y_i) = \frac{1}{k}\sum_{l \in S(\hat{\v y}_i,k)} |\hat{y}_{il}-y_{il}|
\end{align}
Since $S(\v y_i,k)$ picks the $k$ largest values of $y_{il}$, Ranking-error@$k(\hat{\v y}_i,\v y_i)\ge 0$ always. Due to summation over only non-negative values, Regression-error@$k(\hat{\v y}_i,\v y_i)\ge0$ always too, which together establish the inequality $0 \le \frac{M}{2}$.

Now, let's prove that $\frac{M}{2} \le \text{XMAD@}2k(\hat{\v y},\v y)$. First, we begin by showing that $\text{Ranking-error@}k(\hat{\v y}_i,\v y_i) \le 2\text{XMAD@}2k(\hat{\v y},\v y)$. Without loss of generality, let's assume that the sets $S(\v y_i,k)$ and $S(\hat{\v y}_i,k)$ are non-overlapping. In the contrary case, the same arguments can be applied to another predicted set $S'$ created by replacing the overlapping labels in $S(\hat{\v y}_i,k)$ with different labels having smaller $\hat{y}_{il}$ values. Thus bounding ranking error on $S'$ will also bound it on $S(\hat{\v y}_i,k)$. Now,

\begin{align}
    &\frac{1}{k}\sum_{l \in S(\v y_i,k)} y_{il} - \frac{1}{k}\sum_{l \in S(\hat{\v y}_i,k)} y_{il}\\
    & \le \frac{1}{k}\sum_{l \in S(\v y_i,k)} y_{il} - \frac{1}{k}\sum_{l \in S(\hat{\v y}_i,k)} y_{il} + \frac{1}{k}\sum_{l \in S(\hat{\v y}_i,k)} \hat{y}_{il} - \frac{1}{k}\sum_{l \in S(\v y_i,k)} \hat{y}_{il}\\
    & \le \frac{1}{k}\sum_{l \in S(\v y_i,k)} e_{il}+\frac{1}{k}\sum_{l \in S(\hat{\v y}_i,k)} e_{il}\mypos{3}\text{where,}\mypos{3}e_{il} = |y_{il}-\hat{y}_{il}|\\
    & \le \frac{1}{k}\sum_{l \in S(\v e_i,2k)} e_{il}\\
    & = 2\text{MAD@}2k(\hat{\v y},\v y)
\end{align}
Bounding the regression error is quite straightforward, hence we skip the proof here.

Finally, the $\text{XMAD@}2k(\hat{\v y}_i,\v y_i) \le \text{XRMSE@}2k(\hat{\v y}_i,\v y_i)$ property follows by using Jensen's inequality on the square function which is concave.
\end{proof}

\begin{theorem}
Given that $y_l = \prod_{h=1}^{H} z_{lh}$ and $\hat{y}_l = \prod_{h=1}^{H} \hat{z}_{lh}$ and under the standard unvisited node assumption of Parabel
\begin{align}
\label{eqn:kld1}
    D_{KL} (y_l || \hat{y}_l)
    \le \sum_{h=1}^H s_{lh} D_{KL}( z_{lh} || \hat{z}_{lh} )~~~\text{where}~~~ s_{lh}=\prod_{\tilde{h}=1}^h z_{l(\tilde{h}-1)}
\end{align}
\end{theorem}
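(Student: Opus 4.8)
The plan is to lift the two scalar quantities $y_l$ and $\hat y_l$ to full joint distributions over the root-to-leaf path and then combine the chain rule for KL divergence with the data-processing inequality. Concretely, fix a data point $\v x$ and label $l$, and introduce Bernoulli indicators $B_1,\dots,B_H$, where $B_h=1$ signifies that $\v x$ reaches node $n_{lh}$. Let $P$ be the joint law of $(B_1,\dots,B_H)$ induced by the true edge probabilities $z_{lh}$, and let $Q$ be the joint law induced by the predicted edge probabilities $\hat z_{lh}$. Since $y_l=\prod_h z_{lh}=\prob[B_H=1]$ under $P$ and $\hat y_l=\prob[B_H=1]$ under $Q$, the left-hand side $D_{KL}(y_l\|\hat y_l)$ is exactly the KL divergence between the $B_H$-marginals of $P$ and $Q$.

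The first key step is to argue that the unvisited-node assumption endows $P$ (and identically $Q$) with a Markov structure in which failure is absorbing: $B_h$ depends on $B_{<h}$ only through $B_{h-1}$, with $\prob[B_h=1\mid B_{h-1}=1]=z_{lh}$ and $\prob[B_h=0\mid B_{h-1}=0]=1$. The second clause is precisely the formalization that the children of an unvisited node are never traversed, so the sequence $B_1,\dots,B_H$ is monotone non-increasing; $Q$ obeys the identical transition law with $\hat z_{lh}$ in place of $z_{lh}$. Because the root is always visited, $B_1\equiv 1$ under both laws.

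With this structure in hand I would apply the chain rule $D_{KL}(P\|Q)=\sum_{h=1}^H \ex_{b_{<h}\sim P}\big[D_{KL}\!\big(P(B_h\mid b_{<h})\,\|\,Q(B_h\mid b_{<h})\big)\big]$. By the Markov/absorbing structure the inner conditional divergence equals $D_{KL}(z_{lh}\|\hat z_{lh})$ on the event $\{b_{h-1}=1\}$ and vanishes identically on $\{b_{h-1}=0\}$, since there both laws place unit mass on $B_h=0$. Taking the expectation over $b_{<h}$ therefore weights each edge term by $\prob[B_{h-1}=1]=\prod_{\tilde h=1}^{h-1}z_{l\tilde h}$, which (with the convention $z_{l0}=1$) is exactly the stated quantity $s_{lh}=\prod_{\tilde h=1}^{h}z_{l(\tilde h-1)}$, the probability of reaching the parent $n_{l(h-1)}$. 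This gives the exact identity $D_{KL}(P\|Q)=\sum_{h=1}^H s_{lh}\,D_{KL}(z_{lh}\|\hat z_{lh})$.

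Finally, since the $B_H$-marginal is a deterministic function (a coordinate projection) of the whole path, the data-processing inequality yields $D_{KL}(y_l\|\hat y_l)\le D_{KL}(P\|Q)$; chaining this with the identity above establishes the theorem. The step I expect to be the main obstacle is the disciplined use of the unvisited-node assumption, which must be invoked twice: once to obtain the Markov property and once to force the conditional divergences on $\{B_{h-1}=0\}$ to vanish. That assumption is exactly what makes the edge decomposition an \emph{equality}, so that the only inequality in the final bound is the information loss incurred by the data-processing step. A reader wary of the distributional lift could instead prove the claim by induction on the path length $H$, peeling off the terminal edge, though I find the chain-rule/data-processing route cleaner and essentially calculation-free.
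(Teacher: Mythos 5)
Your proposal is correct and follows essentially the same route as the paper's own proof: the paper likewise lifts $y_l$ and $\hat y_l$ to the joint law of the path indicators, bounds the marginal KL by the joint KL (via convexity of $x\log\frac{x}{y}$, i.e.\ data processing), and then uses the chain rule together with the unvisited-node assumption to kill the conditional terms on the event that the parent is unvisited, yielding the weighted edge decomposition. The only cosmetic difference is that the paper peels off the terminal edge and recurses whereas you apply the chain rule level-by-level from the root.
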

\begin{proof}
We assume that $0\log{\frac{0}{p}} = 0$ where $0 \le p \le 1$.
We also use the unvisited node assumption in Parabel, $\mathbb{P}(z_{lh}=0|z_{l(h-1)}=0)=1$, which means that a child of an unvisited node is never visited.

Let $I_{y_l} \in \{0,1\}$ be the probabilistic variable which says whether label $l$ is relevant to a data point in reference, {\it i.e.} $\mathbb{P}(I_{y_l}=1) = y_l$ and $\mathbb{P}(I_{y_l}=0) = 1-y_l$. Similarly let $I_{z_{lh}} \in \{0,1\}$ be the probabilistic variable which says whether the data point visits node $n_{lh}$ or not, {\it i.e.} $\mathbb{P}(I_{z_{lh}}=1) = z_{lh}$ and $\mathbb{P}(I_{z_{lh}}=0) = 1-z_{lh}$.

Now, since the relevance of label $l$ to a data point is equivalent whether the label path is traversed in the tree by the data point: $y_l=z_{lH}$ and $\mathbb{P}(I_{y_l}) = \mathbb{P}(I_{z_{lH}},\cdots,I_{z_{l1}})$ hold true. 

Due the fact that $x\log{\frac{x}{y}}$ is a convex function, it is easy to show that the KL-divergence between 2 marginal distributions is upper bounded by the KL-divergence of the corresponding joint distributions.

\begin{align}
\label{eqn:kld}
    &D_{KL} (\mathbb{P}(I_{y_l}) || \mathbb{P}(I_{\hat{y}_l})) = D_{KL} (\mathbb{P}(I_{z_{lH}}) || \mathbb{P}(I_{\hat{z}_{lH}}))\\
    &\le D_{KL} (\mathbb{P}(I_{z_{lH}},\cdots,I_{z_{l1}}) || \mathbb{P}(I_{\hat{z}_{lH}},\cdots,I_{\hat{z}_{l1}}))
\end{align}
\begin{align}
    &\text{By using chain rule of KL-Divergence:}\\
    &=D_{KL} (\mathbb{P}(I_{z_{l(H-1)}},\cdots,I_{z_{l1}}) || \mathbb{P}(I_{\hat{z}_{l(H-1)}},\cdots,I_{\hat{z}_{l1}}))\\
    &+ \mathbb{P}(I_{z_{l(H-1)}}=1)D_{KL} (\mathbb{P}(I_{z_{lH}}|I_{z_{l(H-1)}}=1) || \mathbb{P}(I_{\hat{z}_{lH}}|I_{\hat{z}_{l(H-1)}}=1))\\
    &+\mathbb{P}(I_{z_{l(H-1)}}=0)D_{KL} (\mathbb{P}(I_{z_{lH}}|I_{z_{l(H-1)}}=0) || \mathbb{P}(I_{\hat{z}_{lH}}|I_{\hat{z}_{l(H-1)}}=0))
\end{align}
\begin{align}
    &\text{By unvisited node assumption,}\\ &I_{z_{l(H-1)}}=0 \implies I_{z_{lH}}=0 \mypos{3}\text{and}\mypos{3} I_{\hat{z}_{l(H-1)}}=0 \implies I_{\hat{z}_{lH}}=0 \\
    &\text{hence:} \\
    &=D_{KL} (\mathbb{P}(I_{z_{l(H-1)}},\cdots,I_{z_{l1}}) || \mathbb{P}(I_{\hat{z}_{l(H-1)}},\cdots,I_{\hat{z}_{l1}}))\\
    &+ \mathbb{P}(I_{z_{l(H-1)}}=1)D_{KL} (\mathbb{P}(I_{z_{lH}}|I_{z_{l(H-1)}}=1) || \mathbb{P}(I_{\hat{z}_{lH}}|I_{\hat{z}_{l(H-1)}}=1))\\
    &=D_{KL} (\mathbb{P}(I_{z_{l(H-1)}},\cdots,I_{z_{l1}}) || \mathbb{P}(I_{\hat{z}_{l(H-1)}},\cdots,I_{\hat{z}_{l1}})) \\
    &+ (\prod_{\tilde{h}=1}^{H-1} z_{l\tilde{h}}) \Big(z_{lh}\log{\frac{z_{lh}}{\hat{z}_{lh}}} + (1-z_{lh})\log{\frac{1-z_{lh}}{1-\hat{z}_{lh}}}\Big)
\end{align}
\begin{align}
    &\text{By recursively applying above simplification}\\
    & \text{at higher level tree nodes:}\\
    &=\sum_{h=1}^H s_{lh} \Big(z_{lh}\log{\frac{z_{lh}}{\hat{z}_{lh}}} + (1-z_{lh})\log{\frac{1-z_{lh}}{1-\hat{z}_{lh}}}\Big)
\end{align}
The above upper bound is exactly the quantity that \alg minimizes during training by assuming logistic model for probability estimates.
\end{proof}

\begin{lemma}
\alg's overall training objective minimizes an upper bound over XMAD@$k$ for all $k$, with the bound being tighter for smaller $k$ values.
\end{lemma}
\begin{proof}
As presented in the next theorem, \alg minimizes an upper bound on XMAD@$1 = max_{l=1}^L |y_l - \hat{y}_l|$. Note that XMAD@$k \le$ XMAD@$1 \forall k$. Furthermore, as $k$ increases, XMAD@$k$ averages smaller and smaller errors compared the largest errors, therefore the bound is tighter for smaller values of $k$ which are close to $k=1$.
\end{proof}

\begin{theorem}
When each data point has at most $O(\log L)$ positive labels, the expected WP@$k$ regret and XMAD@$k$ error suffered by \alg's pointwise inference algorithm are bounded by:\\
$$O(\log^2 L\sqrt{\frac{W}{\sqrt{Np}}}\sqrt{1+\sqrt{5\log{\frac{3L}{\delta}}} })$$ \\with probability at least $1-\delta$, where $N$ is the total training points, $L$ is the number of labels, $W$ is the maximum norm across all node classifier vectors and $p$ is the minimum probability density of $\v x$ distribution that any tree node receives. 
\end{theorem}
\begin{proof}
The outline of the proof is as follows. First, we see that the WP@$k$ regret and XMAD@$k$ error for a given data point are bounded, in a straight forward manner, by \alg's node and label classifier objectives over that data point. For good overall generalization performance, each classifier needs to receive enough training samples as well as learn to generalize well from those samples. We derive probability bounds for those events simultaneously. While these steps together give the regret bounds for the classifier during exact prediction ({\it i.e.}, calculate the scores for all labels for a given test point), the regret suffered by the greedy, beam-search algorithm might actually be more than that. Therefore, in a follow-up step, we also give a bound for this approximate algorithm which is only worse by $O(\log L)$. This gives us the final sample complexities.

By Lemma~(\ref{lem:main}), both $\frac{1}{2}$WP@$k$ and XMAD@$2k$ are bounded by XRMSE@$2k$ which is in turn bounded by $\max_{l=1}^L |y_l - \hat{y}_l|$.

Now, using Pinsker's inequality~\citep{Krzysztof16},
\begin{align}
    &\max_{l=1}^L |y_l - \hat{y}_l|\\
    &\le\max_{l=1}^L \sqrt{\frac{1}{2}D_{KL}(y_l,\hat{y}_l)}\\
    &=\sqrt{\max_{l=1}^L\frac{1}{2}D_{KL}(y_l,\hat{y}_l)}\\
    &\text{From (\ref{eqn:kld1}):}\\
    &\le\sqrt{\max_{l=1}^L\frac{1}{2}\sum_{h=1}^H s_{lh} D_{KL}( z_{lh} || \hat{z}_{lh} )}\\
    &\le\sqrt{\frac{1}{2}\sum_{n:z_{n-1}>0} s_{n} D_{KL}( z_{n} || \hat{z}_{n} )}\\
    &\text{where $z_{n-1}$ is value in parent of node $n$}\\
\end{align}
\end{proof}

For good generalization performance, we need a small expected regret with respect to distribution over data point $\v x$:

\begin{align}
\label{eqn:r1}
    &\mathbb{E}_x \max_{l=1}^L |y_l - \hat{y}_l|\\
    &\text{By concavity of square root function:}\\
    &\le \sqrt{\frac{1}{2} \mathbb{E}_x \sum_{n:z_{n-1}>0} s_{n} D_{KL}( z_{n} || \hat{z}_{n} )}
\end{align}

Now we try to bound the above quantity by relating it to training error.

Let $p_n$ be the expected fraction of the probability density over $\v x$ that a tree node $n$ receives. This is precisely the density of data points which have at least one label with non-zero relevance in the subtree rooted at node $n$. Now, let's compute the probability that the node $n$ receives at least $Np_n(1-k)$ training points where $N$ is the number of total training points and $Np_n$ is the expected number of training points that node $n$ would receive. By using chernoff bound, this probability is at least $1 - \exp(-\frac{p_nNk^2}{2})$. Now, the probability that all tree nodes $n$ would simultaneously receive at least $Np_n(1-k)$ training points is at least $1-L\exp(-\frac{pNk^2}{2})$ since there are at most $L$ tree nodes and each has $\v x$ density of at least $p$.

Now, we use the result in~\citep{Kakade09}. Since the logistic loss used for modeling probabilities in \alg is lipschitz continuous with constant $1$ and logistic regression parameters are bounded by norm $W$, and $\v x$ is bounded by norm $1$, for any regressor in \alg,
\begin{align}
    \mathbb{E}_x s_n D_{KL}(z_n,\hat{z}_n) \le \hat{\mathbb{E}}_x s_n D_{KL}(z_n,\hat{z}_n) + 2W\sqrt{\frac{1}{Np(1-k)}} + 2W\Delta
\end{align}
with probability at least $1-\exp(-2Np(1-k)\Delta^2)$ where $\hat{\mathbb{E}}_x D_{KL}(z_n,\hat{z}_n)$ is the average training error in node $n$ which is $0$ as per our assumption.

Combining the above reasonings, along with the fact that there are at most $2L$ regressors in \alg, we can conclude that with probability of at least $1-L\exp(-\frac{pNk^2}{2})-2L\exp(-2Np(1-k)\Delta^2)$, each node has expected error bounded simultaneously as below:
\begin{align}
    \mathbb{E}_x s_n D_{KL}(z_n,\hat{z}_n) \le 2W\sqrt{\frac{1}{Np(1-k)}} + 2W\Delta
\end{align}

Now, note that $k$ can be given any value in $[0,1]$ and the above bounds vary accordingly. We choose to give $k=2\Delta(\sqrt{\Delta^2+1}-\Delta)$. Then, with probability at least $1-3L\exp(-2(\sqrt(2)-1)^2Np\Delta^2)$, for all regressors
\begin{align}
    \mathbb{E}_x s_n D_{KL}(z_n,\hat{z}_n) \le 2W\sqrt{\frac{1}{Np(1-2\Delta(\sqrt(\Delta^2+1)-\Delta))}} + 2W\Delta
\end{align}
In other words, with probability at least $1-\delta$ over the training samples, for all regressors,
\begin{align}
    &\mathbb{E}_x s_n D_{KL}(z_n,\hat{z}_n) \le 2W\sqrt{\frac{1}{Np(1-2\Delta(\sqrt(\Delta^2+1)-\Delta))}}\\
    &+ 2W\sqrt{\frac{1}{2(\sqrt{2}-1)^2Np}\log\Big(\frac{3L}{\delta}\Big)}
\end{align}
where $\Delta=\sqrt{\frac{1}{2(\sqrt{2}-1)^2Np}\log\Big(\frac{3L}{\delta}\Big)}$. Now since $\Delta \to 0$ as $N\to\infty$, for large enough $N$, the above bound can be approximated to

\begin{align}
    \mathbb{E}_x s_n D_{KL}(z_n,\hat{z}_n) \le 2W\sqrt{\frac{1}{Np}} + 2W\sqrt{\frac{1}{2(\sqrt{2}-1)^2Np}\log\Big(\frac{3L}{\delta}\Big)}
\end{align}

From~(\ref{eqn:r1}),
\begin{align}
\label{eqn:r1}
    &\mathbb{E}_x \max_{l=1}^L |y_l - \hat{y}_l|\\
    &\le \sqrt{\frac{1}{2} \mathbb{E}_x \sum_{n:z_{n-1}>0} s_{n} D_{KL}( z_{n} || \hat{z}_{n} )}\\
    &\text{Since any $\v x$ has on average $\log L$ non-zero labels and}\\
    &\text{since height of the tree is $\log L$}\\
    &\text{the number of nodes with $z_{n-1}>0$ for any $\v x$ is on average $\log^2 L$, hence:}\\
    &\le \sqrt{\frac{\log^2 L}{2} \Big(2W\sqrt{\frac{1}{Np}} + 2W\sqrt{\frac{1}{2(\sqrt{2}-1)^2Np}\log\Big(\frac{3L}{\delta}\Big)} \Big)}\\
    &\le\log L \sqrt{\frac{W}{\sqrt{Np}}}\sqrt{1+\sqrt{5\log\Big(\frac{3L}{\delta}\Big)}}
\end{align}
with probability at least $1-\delta$ over training samples.

The above bound holds for exact prediction where all label probabilities are computed for a given test point. Now we analyse the extra regret due to the greedy, approximate, beam search based, pointwise inference algorithm used by \alg.

During beam-search, a point traverses the tree level-by-level. At each tree level, a small shortlist of around $k=10$ most probable nodes, {\it i.e.} nodes with most relevant labels their subtrees, are maintained and extended on to next level. If accurate label relevances were available, then beam search would always return the best set of labels, since each node's $z_n$ variable value matches the most relevant label in its subtree. Unfortunately, due to generalization error, the estimated $\hat{z}_n$ values might not exactly match the $Z_n$ values. As a result, the regret accumulates at each tree level whenever a node with lower $z_n$ is maintained in shortlist instead of the highest one. The regret suffered is at most $\max_{n \in S} 2|z_n - \hat{z}_n|$, where $S$ is the set of shortlisted nodes at a tree level. A little more algebra reveals that this quantity is in fact bounded by (\ref{eqn:r1}).

\begin{align}
    &\max_{n \in S} 2|z_n - \hat{z}_n| \le 
    \sqrt{\frac{1}{2} \mathbb{E}_x \sum_{n:z_{n-1}>0} s_{n} D_{KL}( z_{n} || \hat{z}_{n} )}
\end{align}

which is the bound on the regret suffered by exact prediction algorithm. That is, beam-search can suffer at most the same amount of regret at each tree level that exact prediction suffers as a whole. Now since there are $\log L$ tree levels, the regret of beam search algorithm is bounded by
\begin{align}
    \le\log^2 L \sqrt{\frac{W}{\sqrt{Np}}}\sqrt{1+\sqrt{5\log\Big(\frac{3L}{\delta}\Big)}}
\end{align}

\bibliographystyle{ACM-Reference-Format}


\end{document}